\theoremstyle{plain}
\newtheorem{Theorem}{Theorem}[section]
\newtheorem{lemma}[Theorem]{Lemma}
\theoremstyle{definition}
\theoremstyle{remark}
\newcommand\C{\mathbb{C}}
\newcommand\R{\mathbb{R}}
\icmltitlerunning{Analyzing the Neural Tangent Kernel of Periodically
Activated Coordinate Networks}
\begin{document}

\twocolumn[
\icmltitle{Analyzing the Neural Tangent Kernel of Periodically
Activated Coordinate Networks}



\icmlsetsymbol{equal}{*}

\begin{icmlauthorlist}
\icmlauthor{Hemanth Saratchandran}{inst}
\icmlauthor{Shin-Fang Chng}{inst}
\icmlauthor{Simon Lucey}{inst}
\end{icmlauthorlist}

\icmlaffiliation{inst}{Australian Institute of Machine Learning, University of Adelaide, Adelaide SA, Australia}

\icmlcorrespondingauthor{Hemanth Saratchandran}{hemanth.saratchandran@adelaide.edu.au}

\icmlkeywords{Neural Tangent Kernel, Minimum Eigenvalue, Periodically activated coordinate networks}

\vskip 0.3in
]



\printAffiliationsAndNotice{}  

\begin{abstract}

Recently, neural networks utilizing periodic activation functions have been proven to demonstrate superior performance in vision tasks compared to traditional ReLU-activated networks. However, there is still a limited understanding of the underlying reasons for this improved performance. In this paper, we aim to address this gap by providing a theoretical understanding of periodically activated networks through an analysis of their Neural Tangent Kernel (NTK).
We derive bounds on the minimum eigenvalue of their NTK in the finite width setting, using a fairly general network architecture which requires only one wide layer that grows at least linearly with the number of data samples. Our findings indicate that periodically activated networks are \textit{notably more well-behaved}, from the NTK perspective, than ReLU activated networks.
Additionally, we give an application to the memorization capacity of such networks and verify our theoretical predictions empirically. Our study offers a deeper understanding of the properties of periodically activated neural networks and their potential in the field of deep learning.


\end{abstract}

\section{Introduction}\label{intro}
Implicit neural signal representations, also known as coordinate networks, have proven to be an efficient method for modeling 
multi-dimensional continuous signals, achieving state-of-the-art results in several vision applications \cite{skorokhodov2021adversarial, chen2021learning, sitzmann2020implicit, mildenhall2021nerf, li20223d}. 
The recent work \cite{sitzmann2020implicit} has introduced a periodically activated coordinate network that has demonstrated superior performance in image synthesis, signed distance fields, and geometry, compared to conventional coordinate networks trained with sigmoid, Tanh and ReLU activations. 
Despite the promising results, the theoretical understanding of periodically activated coordinate networks remain an uncharted area of research. This paper aims to begin a theoretical analysis of such networks through the lens of the Neural Tangent Kernel (NTK).

We consider a neural network with $L$ layers, where the number of neurons in each layer are represented by 
$\{n_1,\ldots,n_L\}$. The feature maps,
$f_k : \R^{n_0} \rightarrow \R^{n_k}$ for the network are defined for each input $x \in \R^{n_0}$ by
\begin{equation}\label{network_defn}
    f_k(x) =
    \begin{cases}
    W_L^Tf_{L-1}, & k = L \\
    \phi(W_k^Tf_{k-1}), & k = [L-1] \\
    x, & k = 0
    \end{cases}
\end{equation}
where $n_0$ denotes the input dimension, $W_k \in \R^{n_{k-1}\times n_k}$, 
$\phi(x) = \cos(sx)$ is the activation function, with $s$ is a fixed frequency parameter, and the notation
$[m] = \{1,\ldots,m\}$. We assume the output dimension is fixed at $n_L = 1$, so that
$W_L \in \R^{n_{L-1}}$. We use the standard vectorization notation for parameters and write $\theta = [vec(W_1),\ldots,vec(W_L)] \in \R^p$, where 
$p = \sum_{k=1}^Ln_{k-1}\cdot n_k$. Let $(x_1,\ldots,x_N)$ be $N$ samples in 
$\R^{n_0}$, and let $F_L(\theta) = [f_L(x_1),\ldots,f_L(x_N)]^T \in 
\R^N$. Let $J \in \R^{N \times p}$ denote the Jacobian of $F_L$ with respect to the weights
$\theta = (vec(W_1),\ldots,vec(W_L))$. The empirical Neural Tangent Kernel, $K_L$,  is defined by
\begin{equation}
    K_L = JJ^T = \sum_{k=1}^L\left(\frac{\partial F_L}{\partial vec(W_k)} \right)
    \left(\frac{\partial F_L}{\partial vec(W_k)} \right)^T.
\end{equation}
Several works \cite{du2018gradient, allen2019convergence, oymak2020toward, song2019quadratic, nguyen2020global, zou2019improved} have established connections between the spectrum of the empirical NTK matrix and the training of neural networks. A key insight on this front is that when a neural network with feature map $F_L$ is trained with Mean Squared Error (MSE) loss, 
$\mathcal{L}(\theta) = \frac{1}{2}\vert\vert F_L - Y\vert\vert_2^2$, where $Y$ are the training labels, then it can be shown that
\begin{equation}\label{ntk_loss}
    \vert\vert \nabla\mathcal{L}(\theta)\vert\vert_2^2 \geq 
    2\lambda_{\min}\left( K_L \right)\mathcal{L}(\theta).
\end{equation}
Equation \eqref{ntk_loss} shows that if the NTK matrix $K_L$ has a spectral gap, meaning its minimum eigenvalue is bounded away from zero, at initialization and this condition persists throughout training, then minimizing the gradient will drive the loss to zero, resulting in convergence to a global minimum. 
This approach has been utilized in several works to prove global convergence of 
gradient decent for various network architectures.
For example, \cite{allen2019convergence, du2019gradient, zou2019improved} have studied global convergence for deep networks with polynomial width, and \cite{nguyen2020global} has
studied global convergence of deep nets with one wide layer. These works heavily rely on the empirical NTK matrix.
Furthermore, \cite{montanari2022interpolation, nguyen2021tight} have applied minimum eigenvalue bounds to prove memorization capacity of ReLU-activated networks, and \cite{arora2019fine, montanari2022interpolation} has used it to 
prove generalization bounds. These works highlight the importance of understanding the minimum eigenvalue of the NTK matrix. However, all these works only study the case of traditional activation functions such as ReLU. To the best of our knowledge, the study of the NTK matrix of non-traditional activation functions, such as a periodic activation function, has not yet been conducted. 

In this paper, we aim to bridge this gap by providing a theoretical understanding of periodically activated networks through an analysis of their NTK using random matrix methods. We summarize our main contributions as follow:
\begin{enumerate}
    \item[1.] We prove lower and upper bounds on the minimum eigenvalue of the empirical NTK matrix for cosine activated coordinate networks.
    We show that if the network has one wide layer with width $n_k$, positioned anywhere between the input and output, that is linear in the number of training samples $N$, then the minimum eigenvalue of the empirical NTK scales according to $\Theta(n_k^{3/2})$.
    Following the approach taken in \cite{nguyen2020global, nguyen2021tight} and applying \textit{necessary modifications}, we establish suitable bounds on the minimum singular value of the feature matrices associated to the network. In contrast to the result of \cite{nguyen2021tight} which shows that a ReLU-activated network has a scaling of $\Theta(n_k)$, our results reveal that for a cosine activated network, the minimum eigenvalue of the empirical NTK is 
    \textit{far better conditioned} at initialization, resulting in a larger spectral gap. 
    This explains the superior performance of cosine activated networks when trained with gradient descent methods, as previously observed empirically in \cite{sitzmann2020implicit}.


    \item[2.] We apply the obtained lower bound on the eigenvalue of the empirical NTK matrix to prove a memorization theorem for cosine activated coordinate networks, which can be compared to a similar theorem established in \cite{nguyen2021tight} for ReLU networks.


    \item[3.] Finally, we empirically verify the scaling of the minimum eigenvalue of the empirical NTK matrix for a cosine activated network, which confirms the main theorem of our paper. Additionally, we compare these results to those obtained from a parallel set of experiments carried out on a ReLU-activated network, and further solidify our findings on the improved conditioning of cosine activated networks at initialization.
\end{enumerate}

\section{Notation and Assumptions}\label{notations}


In this section, we outline the notation and assumptions that will be used throughout the paper.

\textbf{Notation.} We will fix a depth $L$ neural network $f_L$, defined by \eqref{network_defn}. The data samples will be denoted by 
$X = [x_1,\ldots,x_N]^T \in \R^{N\times n_0}$, where $N$ is the number of samples and $n_0$ is the dimension of input features. The output of the network at layer $k$ will be denoted by $f_k$ and the feature matrix at layer $k$ 
by $F_k = [f_l(x_1),\ldots,f_l(x_N)]^T \in \R^{N \times n_k}$, where $n_k$ is the dimension of features at layer $k$. When $k = 0$, the feature matrix is simply the input data matrix $X$.
We define
$\Sigma_k = D([\phi'(g_{k,j}(x))]_{j=1}^{n_k})$, where $D$ denotes diagonal matrix and 
$g_{k,j}(x)$ denotes the pre-activation neuron i.e. the output of layer k before the activation function is applied.
Note that $\Sigma_k$ is then an $n_k \times n_k$ diagonal matrix. 
We will use standard complexity notations, $\Omega(\cdot)$, 
$\mathcal{O}(\cdot)$, $o(\cdot)$, $\Theta(\cdot)$ throughout the paper, which are all to be understood in the asymptotic regime, where $N, n_0, n_1,\ldots,n_{L-1}$ are sufficiently large. 
Furthermore, we will use the notation "w.p." to denote \textit{with probability}, and "w.h.p." to denote \textit{with high probability} throughout the paper.


\textbf{Network weights distribution.} We will analyze the properties of the network when weights are randomly initialized. Specifically, we assume that all weights are independently and identically distributed (i.i.d) according to a Gaussian distribution, $(W_k)_{i,j} \sim_{i.i.d} \mathcal{N}(0, \beta_k^2)$, where we assume
$\beta_k \leq 1$ for all $1 \leq k \leq L-1$.

\textbf{Data distribution.} We will assume the data samples 
$\{x_i\}_{i=1}^N$ are i.i.d from a fixed distribution denoted by $\mathcal{P}$. The measure associated to $\mathcal{P}$ will be denoted by $d\mathcal{P}$. We will work under the following assumptions, which are standard in the literature.
\begin{enumerate}
    \item[{A1.}]  $\int_{\R^{n_0}}\vert\vert x\vert\vert_2d\mathcal{P} = 
    \Theta(\sqrt{n_0})$.

    \item[A2.]  $\int_{\R^{n_0}}\vert\vert x\vert\vert_2^2d\mathcal{P} = 
    \Theta(n_0)$.
\end{enumerate}
We will also assume the data distribution satisfies Lipschitz concentration. 
\begin{enumerate}
    \item[A3.] For every Lipschitz continuous function 
    $f : \R^{n_0} \rightarrow \R$, there exists an absolute constant $C > 0$ such that, for any $t > 0$, we have
    \begin{equation*}
        \mathbf{P}\left(
            \bigg{\vert}f(x) - \int f(x')d\mathcal{P}\bigg{\vert}
        \right) \leq 2e^{-ct^2/\vert\vert f\vert\vert_{Lip}^2}.
    \end{equation*}
\end{enumerate}
Note that there are several distributions satisfying assumption A3 such as standard Gaussian distributions, and uniform distributions on spheres and cubes, see \cite{vershynin2018high}.

The final assumption we will be making is a Lipschitz constant assumption on the network.
\begin{enumerate}
  \item[A4.] The Lipschitz constant of layer $k$ must satisfy the following bound
    \begin{align*}
       &\hspace{-0.5cm}\vert\vert f_k\vert\vert^2_{Lip} \leq \\
        &\hspace{-0.5cm}\mathcal{O}\left(
        \frac{s^k\beta_kn_k}{\min_{l\in[0,k]}n_l}\left( 
        \prod_{l=1}^{k-1}\sqrt{\beta_l}\sqrt{n_l}
        \right)
        \left(
            \prod_{l=1}^{k-1}Log(n_l)
        \right)
       \right).
    \end{align*}\label{assumption:A4}
\end{enumerate}
In the experiments in Section \ref{exps}, we show that the empirical Lipschitz constant satisfies
such a bound for cosine activated networks and in fact such a bound is extremely pessimistic.

\section{Main Result}\label{sec_main_result}




In this section, we establish bounds on the minimum eigenvalue of the empirical NTK associated to a cosine activated neural network. Our results are valid for a fairly general architecture, requiring only the presence of one wide hidden layer positioned anywhere between the input and output layers. Furthermore, the theorem shows that it is sufficient to have one wide layer with width linear in and greater than the number of training samples $N$.

The result of this paper applies to a wide range of network architectures, where any subset of layers can be chosen to be wide. This is significant as it only requires one wide layer, and its position is not fixed. While similar results for ReLU networks have been established in~\cite{nguyen2020global, nguyen2021tight}, this is the first time such results have been established for cosine activated networks. This expands the understanding of the behaviors of the minimum eigenvalue of NTK matrix in non traditional activations.

An interesting aspect of the theorem is that in the case of a single wide layer of width $n_k$, the scaling of the minimum eigenvalue of the NTK follows a term of the form $\Theta(n_k^{3/2})$. This is in contrast to the case of a ReLU-activated neural network, where the scaling is dominated by a linear term 
$\Theta(n_k)$, as established in \cite{nguyen2021tight}. This implies that the empirical NTK of a cosine activated neural network will generally have a larger spectral gap at initialization than a ReLU-activated network. We confirm this through experiments in Section \ref{exps}. These results suggest that periodically activated networks are generally better conditioned at initialization for training with gradient decent, an observation that has already been empirically established in \cite{sitzmann2020implicit}.

\begin{Theorem}\label{main_result_ntk}
Let $f_L$ denote a depth $L$ neural network with $\phi(x) = cos(sx)$ as the activation, where $s > 0$ is a fixed frequency parameter, satisfying the network assumptions in Section \ref{notations}.
Let $\{x_i\}_{i=1}^N$ denote a set of i.i.d training data points sampled from the distribution $\mathcal{P}$, which satisfies the data assumptions in Section \ref{notations}.
Let 
$a_k = 1$ if the following conditions holds
\begin{align*}
    n_k &\geq N \\
    \prod_{l=1}^{k-1}Log(n_l) &= o\left(\min_{l \in [0,k]}n_l\right)
\end{align*}
and $0$ otherwise. Then
\begin{align*}
    &\lambda_{\min}\left(K_L\right) \\
    &\geq \sum_{k=1}^{L-1}a_k\Omega\Bigg{(}
    s^2(1 - e^{-\beta_{k+1}^2s^2})\beta_k^{3/2}n_k^{3/2} 
    \left(
    \prod_{l=1}^{k-1}\beta_ln_l
    \right) \\
    &\hspace{2cm}\beta_{k+1}n_{k+1}
    \left(
    \prod_{k+2}^{L-1}\sqrt{\beta_l}\sqrt{n_l}
    \right)\beta_Ln_L\Bigg{)} \\
    &\hspace{0.5cm} +
    \lambda_{min}(XX^T) \cdot \\
    &\hspace{1cm}\Omega\left(
    s^2(1 - e^{-\beta_1^2s^2})\beta_1n_1\left(\prod_{l=2}^{L-1} 
    \sqrt{\beta_l}\sqrt{n_l}\right)\beta_Ln_L
    \right)
\end{align*}
w.p. at least
\begin{align*}
    & 1 - \sum_{k=1}^{L-1}(N^2-N)\exp\left(-
\Omega\left(
\frac{\min_{l \in [0,k]}n_l}{s^kN^2\prod_{l=1}^{k-1}Log(n_l)}
\right)
\right) \\
&\hspace{1cm}- N\sum_{l=0}^k2\exp(-\Omega(n_l))
\end{align*}
over $(W_l)_{l=1}^L$ and the data.

Furthermore, we have 
\begin{align*}
    &\lambda_{\min}\left(K_L\right) \\
    &\leq \sum_{k=1}^L\mathcal{O}\Bigg{(}
    s^3(1-e^{\beta_{k+1}^2s^2})\beta_k^{3/2}n_k^{3/2}\sqrt{n_0}
    \left(
    \prod_{l=1}^{k-1}\beta_ln_l 
    \right) \\
    &\hspace{2cm}(\beta_{k+1}n_{k+1})
    \left(
    \prod_{l=k+2}^{L-1}\sqrt{\beta_l}\sqrt{n_l}
    \right)\beta_Ln_L
    \Bigg{)}
\end{align*}
w.p. at least
\begin{align*}
    1 - \sum_{l=1}^k2\exp(-\Omega(sn_l)) - 2\exp(-\Omega(\sqrt{n_0}))
\end{align*}
over $(W_l)_{l=1}^L$ and the data.
\end{Theorem}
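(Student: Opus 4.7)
The plan is to decompose the NTK into layer-wise Gram matrices, express each as a Hadamard product of a feature Gram and a backward-pass Gram, and then isolate the dominant contribution from the wide hidden layer. Writing $K_L = JJ^T = \sum_{k=1}^L G_k$ with each $G_k = \left(\frac{\partial F_L}{\partial vec(W_k)}\right)\left(\frac{\partial F_L}{\partial vec(W_k)}\right)^T$ positive semidefinite gives $\lambda_{\min}(K_L) \ge \sum_{k=1}^L \lambda_{\min}(G_k)$. The chain rule applied to \eqref{network_defn} factors each Jacobian row as a Khatri--Rao product,
\begin{equation*}
\frac{\partial f_L(x_i)}{\partial vec(W_k)} = f_{k-1}(x_i) \otimes B_k(x_i),
\end{equation*}
where $B_k(x_i) = \Sigma_k(x_i) W_{k+1} \Sigma_{k+1}(x_i) \cdots \Sigma_{L-1}(x_i) W_L \in \R^{n_k}$ is the backpropagated signal. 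Stacking over the samples yields $G_k = (F_{k-1}F_{k-1}^T) \odot (\tilde B_k \tilde B_k^T)$ with $\tilde B_k \in \R^{N \times n_k}$, and the Schur product theorem gives $\lambda_{\min}(G_k) \ge \lambda_{\min}(F_{k-1}F_{k-1}^T) \cdot \min_i \|B_k(x_i)\|^2$.

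For the lower bound, I apply this identity to $G_1$, which contributes the explicit $\lambda_{\min}(XX^T)$ term since $F_0 = X$, and to $G_{k+1}$ for each wide hidden layer of width $n_k \ge N$. The main technical step is to lower bound $\lambda_{\min}(F_k F_k^T) = \Omega(n_k)$ via Gershgorin-type reasoning, using the Gaussian identity
\begin{equation*}
\mathbb{E}_{W_k}\!\left[\cos(sW_k^Tu)\cos(sW_k^Tv)\right] = \tfrac{1}{2}\!\left(e^{-\tfrac{s^2\beta_k^2}{2}\|u-v\|^2} + e^{-\tfrac{s^2\beta_k^2}{2}\|u+v\|^2}\right)
\end{equation*}
with $u = f_{k-1}(x_i)$ and $v = f_{k-1}(x_j)$. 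The diagonal entries concentrate around an expectation of order $n_k$, while the off-diagonals concentrate around an expectation that decays exponentially in the pairwise layer-$(k-1)$ feature distances; the prefactor $(1 - e^{-\beta_{k+1}^2 s^2})$ emerges naturally when this diagonal-versus-off-diagonal gap is then integrated against the next layer's Gaussian weights. Per-entry concentration to $o(n_k)$ over the $N^2$ pairs follows from Lipschitz concentration A3 combined with the recursive Lipschitz bound A4 via a union bound. A backward induction on $\|B_k(x_i)\|^2$, using $\mathbb{E}\|W_{l+1} v\|^2 = \beta_{l+1}^2 n_l \|v\|^2$ for a Gaussian $W_{l+1}$ and the $s^2$ contribution from each $\Sigma_l$ via $\phi'(x) = -s\sin(sx)$, then reproduces the product form of the claimed bound.

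For the upper bound, I use $\lambda_{\min}(K_L) \le \tfrac{1}{N}\mathrm{tr}(K_L) = \tfrac{1}{N}\sum_{i,k}\|f_{k-1}(x_i)\|^2 \|B_k(x_i)\|^2$ and invoke the same norm concentration estimates, with the extra $\sqrt{n_0}$ factor arising from $\|x_i\|_2 = \Theta(\sqrt{n_0})$ under A1. The main obstacle will be the sharp lower bound on $\lambda_{\min}(F_k F_k^T)$: the $N^2$ off-diagonal inner products are not independent across pairs, so a direct matrix Bernstein argument is ineffective, and one must instead propagate Lipschitz concentration through the deep cosine composition while carefully tracking the exponential decay constants. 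It is precisely this oscillatory cancellation, absent for ReLU, that provides the improved $n_k^{3/2}$ overall scaling relative to the $n_k$ scaling known for ReLU networks.
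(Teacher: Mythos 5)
Your overall scaffolding matches the paper's: you decompose $K_L$ into layer-wise Grams, observe that each is a Hadamard product $F_{k}F_{k}^T \circ G_{k+1}G_{k+1}^T$, apply the Schur product theorem to reduce to $\lambda_{\min}(F_kF_k^T)\cdot \min_i\|(G_{k+1})_{i:}\|_2^2$, and use a variational bound (you pick the trace, the paper picks a single diagonal entry $(JJ^T)_{ii}$; both are valid and give the same $\Theta$) for the upper bound. The backward-pass induction for $\|B_k(x_i)\|^2$ is also essentially the paper's Lemmas~\ref{c5},~\ref{c6}, and~\ref{G-bound}.

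The genuine gap is in the lower bound on $\lambda_{\min}(F_kF_k^T)$. You propose applying Gershgorin directly to $F_kF_k^T$, controlling the off-diagonals $\langle f_k(x_i), f_k(x_j)\rangle$ via the weight-expectation identity
\[
\mathbb{E}_{W_k}[\cos(sW_k^Tu)\cos(sW_k^Tv)] = \tfrac12\bigl(e^{-s^2\beta_k^2\|u-v\|^2/2} + e^{-s^2\beta_k^2\|u+v\|^2/2}\bigr)
\]
(which is correct, and is a nice observation the paper does not use). But this only controls the \emph{weight-side} mean of the off-diagonals. Conditioning on the data and applying Bernstein over $W_k$, each off-diagonal entry fluctuates by $\Theta(\sqrt{n_k})$ around that small mean; Gershgorin then requires $N\sqrt{n_k} = o(n_k)$, i.e.\ $n_k = \omega(N^2)$, which is strictly stronger than the theorem's hypothesis $n_k \geq N$. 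Switching instead to data-side Lipschitz concentration (A3/A4), as you also suggest, runs into the opposite problem: the data-side mean $\langle \mathbb{E}_x f_k, f_k(x_j)\rangle$ is \emph{not} small --- it is $\Theta(n_k)$ --- so concentration around it leaves you with $\Theta(n_k)$ off-diagonals and a vacuous Gershgorin bound. The paper resolves this with the centering step you are missing (Lemma~\ref{centfeat_est}): it replaces $F_k$ with $\widetilde F_k = F_k - \mathbb{E}_x[F_k]$, shows $F_kF_k^T \succeq \widetilde F_k\widetilde F_k^T - \Lambda 1_N1_N^T\Lambda/\|\mu\|_2^2$, and applies Gershgorin to $\widetilde F_k\widetilde F_k^T$ --- whose off-diagonals \emph{do} have zero data-side mean, so Lipschitz concentration with the small A4 Lipschitz constant gives $o(n_k/N)$ off-diagonals under only $n_k\geq N$. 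Without some analogue of this centering, your feature-matrix step breaks.

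A secondary, less critical point: you attribute the $n_k^{3/2}$ scaling to oscillatory cancellation in the feature Gram. In the paper's accounting, $\lambda_{\min}(F_kF_k^T)$ contributes only $\Theta(n_k)$ --- the same order as in the ReLU case --- while the extra $\sqrt{n_k}$ factor comes from $\|(G_{k+1})_{i:}\|^2$, specifically from the layer-$k$ contribution $\sqrt{\beta_kn_k}$ inside $\prod_{l\neq k+1}\sqrt{\beta_ln_l}$ in Lemma~\ref{G-bound}. The $(1-e^{-\beta_{k+1}^2s^2})$ prefactor does come from the $\sin^2$ integral in Lemma~\ref{c5}, as you anticipate.
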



    

\section{Proof of Theorem \ref{main_result_ntk}}


In this section, we outline the steps in the proof of Theorem \ref{main_result_ntk}. Our approach builds upon the techniques used in
\cite{nguyen2021tight}.

Recall that the empirical NTK as
\begin{equation*}
    K_L = JJ^T = \sum_{l=1}^k\left(\frac{\partial F_L}{\partial vec(W_l)}\right)
    \left(\frac{\partial F_L}{\partial vec(W_l)}\right)^T.
\end{equation*}

Using the chain rule, we can express the NTK in terms of the feature matrices as:

\begin{equation}\label{ntk_decomp}
    JJ^T = \sum_{k=0}^{L-1}F_lF_l^T\circ G_{k+1}G_{k+1}^T, 
\end{equation}
where $G_k \in \R^{N\times n_k}$ with $i^{th}$ row given by
\begin{equation*}
    (G_k)_{i:} =
    \begin{cases}
    \frac{1_N}{\sqrt{N}}, & \hspace{-0.2cm} k = L \\
    \Sigma_{L-1}(x_i)W_L, & \hspace{-0.2cm} k = L-1 \\
    \Sigma_k(x_i)\left(\prod_{j=k+1}^{L-1}\hspace{-0.1cm}W_j\Sigma_j(x_i)
    \right)
    \hspace{-0.1cm}W_L, & 
    \hspace{-0.25cm} k \in [L-2]
    \end{cases}
\end{equation*}

By Weyl's inequality, we obtain
\begin{equation}\label{weyl_ntk_decomp}
    \lambda_{\min}(JJ^T) \geq \sum_{k=0}^{L-1}\lambda_{\min}(F_kF_k^T \circ 
    G_{k+1}G_{k+1}^T).
\end{equation}
Each term in the sum on the left hand side of \eqref{weyl_ntk_decomp} can be further bounded by Schur's Theorem \cite{schur1911bemerkungen, horn1994topics} to give
\begin{align}
    &\lambda_{\min}(F_kF_k^T \circ G_{k+1}G_{k+1}^T) \nonumber\\
    &\geq 
\lambda_{\min}(F_kF_k^T)\min_{i \in [N]}\vert\vert (G_{k+1})_{i:}\vert\vert_2^2.
\label{schur}
\end{align}
\eqref{weyl_ntk_decomp} and \eqref{schur} imply
\begin{align}\label{ntk_lower_decomp}
    \lambda_{\min}(JJ^T) \geq  \sum_{k=0}^{L-1}
    \lambda_{\min}(F_kF_k^T)\min_{i \in [N]}\vert\vert (G_{k+1})_{i:}\vert\vert_2^2.
\end{align}

The strategy of the proof is to obtain bounds on the terms 
$\lambda_{\min}(F_kF_k^T)$ and $\vert\vert (G_{k+1})_{i:}\vert\vert_2^2$ separately, and then combine them together to obtain a bound on $JJ^T$. Estimating the term
$\lambda_{\min}(F_kF_k^T)$ is done in Section \ref{sec_feature}. The following 
lemma shows how to estimate the quantity $\vert\vert (G_{k+1})_{i:}\vert\vert_2^2$.

\begin{lemma}\label{G-bound}
Fix $k \in [L-2]$ and let $x \sim \mathcal{P}$. Then 
\begin{align*}
&\bigg{\vert}\bigg{\vert}
\Sigma_k(x)\bigg{(}\prod_{l=k+1}^{L-1}W_l\Sigma_l(x)\bigg{)}W_L
\bigg{\vert}\bigg{\vert}^2_2 \\
&=
    \Theta\bigg{(}
s^2(1 - e^{-\beta_k^2s^2})\sqrt{n_0}\beta_kn_k\beta_{L}n_L
\prod_{l=1, l\neq k}^{L-1}
\sqrt{\beta_l}\sqrt{n_l}
\bigg{)}
\end{align*}
w.p. at least
\begin{align*}
    1 - \sum_{l=0}^{L-1}2\exp(-\Omega(n_l)).
\end{align*}
\end{lemma}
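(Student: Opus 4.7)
The plan is to control the squared norm by peeling off one layer at a time, starting from $W_L$ and working inward toward $\Sigma_k$. Introduce the partial products
\[
u_j := \Sigma_j(x)\Bigl(\prod_{l=j+1}^{L-1} W_l \Sigma_l(x)\Bigr) W_L \in \mathbb{R}^{n_j}, \qquad j=k,\ldots,L-1,
\]
so that $u_k$ is the vector whose squared norm we want, with $u_{L-1} = \Sigma_{L-1}(x) W_L$. I would prove matching upper and lower $\Theta$-bounds on $\|u_j\|_2^2$ by backward induction on $j$.

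For the base case $j=L-1$, conditioning on $W_1,\ldots,W_{L-1}$ fixes $\Sigma_{L-1}$ and leaves $W_L \sim \mathcal{N}(0,\beta_L^2 I_{n_{L-1}})$ independent, so Hanson--Wright gives $\|u_{L-1}\|^2 = W_L^T \Sigma_{L-1}^2 W_L = \Theta\bigl(\beta_L^2\operatorname{tr}(\Sigma_{L-1}^2)\bigr)$ with exponentially small failure probability. The trace $\operatorname{tr}(\Sigma_{L-1}^2) = s^2 \sum_i \sin^2(s\,g_{L-1,i})$ is a bounded, Lipschitz function of $x$, so A3 (with its Lipschitz constant controlled by A4) concentrates it around $s^2 n_{L-1}\,\mathbb{E}[\sin^2(sg)] = s^2 n_{L-1}(1-e^{-2s^2\sigma_{L-1}^2})/2$ for a Gaussian $g$ of the appropriate pre-activation variance, which is itself propagated inductively from $\sigma_1^2 = \Theta(\beta_1^2 n_0)$ via A2.

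The inductive step passes from $u_{j+1}$ to $u_j = \Sigma_j W_{j+1} u_{j+1}$. The key issue is that $u_{j+1}$ depends on $W_{j+1}$ through $\Sigma_{j+1}$, so $W_{j+1}$ and $u_{j+1}$ are not independent. I would resolve this by Gaussian splitting: write $W_{j+1} = P_j W_{j+1} + P_j^\perp W_{j+1}$ with $P_j = f_j(x) f_j(x)^T/\|f_j(x)\|^2$. The parallel part is exactly what determines $g_{j+1} = W_{j+1}^T f_j$, and hence $\Sigma_{j+1}$ and $u_{j+1}$, while $P_j^\perp W_{j+1}$ is conditionally Gaussian and independent of both $u_{j+1}$ and $\Sigma_j$ (the latter being a function only of $W_1,\ldots,W_j$). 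Conditioning on everything needed to fix $u_{j+1}$ and $\Sigma_j$, a second Hanson--Wright application yields the main term
\[
\|\Sigma_j P_j^\perp W_{j+1} u_{j+1}\|^2 = \Theta\bigl(\beta_{j+1}^2 \|u_{j+1}\|^2 \operatorname{tr}(\Sigma_j^2)\bigr),
\]
and the rank-one contribution from $P_j W_{j+1}$ is lower order because it occupies a single dimension out of $n_j$ and the entries of $\Sigma_{j+1}$ are bounded by $s$ for cosine activations. Iterating from $j=L-2$ down to $k$ and concentrating each $\operatorname{tr}(\Sigma_j^2)$ as in the base case produces the stated $\Theta$-expression; a union bound over the $\mathcal{O}(L)$ concentration events gives the probability $1 - \sum_{l=0}^{L-1} 2\exp(-\Omega(n_l))$.

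The hardest part is controlling the Gaussian-splitting step uniformly across all layers: one must show that the rank-one correction (which is coupled to $u_{j+1}$) only perturbs the dominant $(n_j-1)$-dimensional Gaussian contribution by a $1+o(1)$ multiplicative factor, and that the pre-activation variances $\sigma_j^2$ can be tracked inductively without accumulating more than a constant factor of slack. The boundedness of $\phi=\cos$ and $\phi'=-s\sin$ is what makes this tractable for cosine activations---unlike ReLU, where $\phi'$ is a step function and one has to argue on activation events---and together with A4 it keeps the per-layer Lipschitz constants under control.
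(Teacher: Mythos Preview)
Your backward-induction scheme is genuinely different from the paper's argument. The paper works \emph{forward}: it sets $A=\Sigma_k(x)\prod_{l=k+1}^{L-1}W_l\Sigma_l(x)$, proves $\|A\|_F^2=\Theta(\cdot)$ by induction on the partial matrices $B(p)=\Sigma_k\prod_{l=k+1}^{p}W_l\Sigma_l$ (Lemma~\ref{c6}), bounds $\|A\|_{op}$ separately (Lemma~\ref{d2b}), and only then applies a single Hanson--Wright step over the fresh $W_L$ to pass from $\|A\|_F^2$ to $\|AW_L\|_2^2$. At each forward step the only coupling is between $\|B(p-1)w_i\|^2$ and $\phi'(\langle f_{p-1},w_i\rangle)^2$, both functions of the \emph{same} column $w_i$; because $\phi'^2\le s^2$ and is bounded below on a set of positive Gaussian measure, the paper simply sandwiches $\phi'^2$ by constants and loses only constant factors---no projection trick is needed.

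Your backward route faces a harder coupling, and the rank-one step is not yet justified. Gaussian splitting correctly isolates a fresh $P_j^\perp W_{j+1}$, but the leftover is $\Sigma_j\,(f_j/\|f_j\|^2)\,(g_{j+1}^{T}u_{j+1})$, and the claim that this is a $1+o(1)$ perturbation does not follow from ``it occupies a single direction out of $n_j$'': that intuition presumes $W_{j+1}u_{j+1}$ is nearly isotropic in $\R^{n_j}$, which is exactly what the dependence between $W_{j+1}$ and $u_{j+1}$ can break. Cauchy--Schwarz gives only $|g_{j+1}^{T}u_{j+1}|\le\|g_{j+1}\|\,\|u_{j+1}\|$, yielding a correction-to-main ratio of order $\sqrt{n_{j+1}/\operatorname{tr}(\Sigma_j^2)}$; when layer $j$ is narrow and $j{+}1$ is the wide layer this is not small, and without a sharper bound on $g_{j+1}^{T}u_{j+1}$ the lower bound on $\|u_j\|^2$ does not close. (A secondary point: $\operatorname{tr}(\Sigma_j^2)$ depends on the weights $(W_l)_{l\le j}$, not only on $x$, so A3 alone does not concentrate it; the paper handles this by Hoeffding over $W_j$ in Lemma~\ref{c5}.)
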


The proof of lemma \ref{G-bound} is given in Section \ref{sec_G-bound_proof}.

\begin{proof}[\textbf{Proof of Theorem \ref{main_result_ntk}}]
The lower bound of Theorem \ref{main_result_ntk} follows by applying Theorem \ref{sing_val_feat} and lemma \ref{G-bound} to obtain lower bounds on the terms in the sum on the right hand side of \eqref{ntk_lower_decomp}.

    Using the variational characterization of eigenvalues we have
    \begin{equation*}
        \lambda_{min}(JJ^T) \leq \frac{\langle JJ^Tv, v\rangle}
        {\langle v, v\rangle} \text{ for all } v \in \C^N-\{0\}.
    \end{equation*}
Taking $v = e_i$ to be the ith-standard basis vector in $\C^N-\{0\}$, we obtain
\begin{equation}\label{variational_upbound}
    \lambda_{min}(JJ^T) \leq (JJ^T)_{i,i} \leq 
    \sum_{k=0}^{L-1}\vert\vert(F_k)_{i:}\vert\vert_2^2\vert\vert 
    (G_{k+1})_{i:}\vert\vert^2_2.
\end{equation}
The second term on the right hand side of \eqref{variational_upbound} can be bounded 
by lemma \ref{G-bound}. To bound the term $\vert\vert(F_l)_{i:}\vert\vert_2^2$, we simply observe that $(F_l)_{i:} = f_l(x_i)$ and then by applying lemma \ref{lemmac1_cos}
we have that 
\begin{equation*}
    \vert\vert f_k(x_i)\vert\vert^2_2 = 
    \Theta\bigg{(}s\cdot \sqrt{n_0}\beta_k n_k
	\prod_{l=1}^{k-1}\sqrt{\beta_l}\sqrt{n_l}\bigg{)}
\end{equation*}
w.p. at least
\begin{equation*}
    1 - \sum_{l=1}^k2\exp(-\Omega(sn_l)) - 2\exp(-\Omega(\sqrt{n_0})).
\end{equation*}
By plugging this bound into \eqref{variational_upbound} along with the bound from 
lemma \ref{G-bound} proves the upper bound of Theorem \ref{main_result_ntk}.
\end{proof}

\section{Implication for the Memorization Capability of Network}\label{sec_mem}
Theorem \ref{main_result_ntk} has a significant implication for memorization capability of periodically activated networks. In particular, our results indicate that such networks can fit distinct data point arbitrarily closely, regardless of the associated label values. This result generalizes the memorization result from 
\cite{nguyen2021tight} to the case of periodically activated networks. To the best of our knowledge, this is the first time a memorization capacity theorem has been established for periodically activated networks.
Our proof builds upon the techniques used in \cite{nguyen2021tight}, with the \textit{necessary} adjustments for the periodic case.

\begin{Theorem}\label{mem_cos}

Let $f_L$ denote a neural network of depth $L$ with activation function $\phi(x) = cos(sx)$, where $s > 0$ is a fixed frequency parameter. 
Let $\{x_i\}_{i=1}^N$ denotes a set of i.i.d training data points sampled from distribution $\mathcal{P}$. Assume there exists a $k \in [L-1]$ such that 
\begin{align*}
    n_k &\geq N \\
    \prod_{l=1}^{k-1}Log(n_l) &= o(\min_{l \in [0,k]}n_l)
\end{align*}

Let $Y \in \R^N$ be a given vector (to be thought of as a target). Then for all $\epsilon > 0$, there exists
a parameter $\theta \in \R^p$ such that
\begin{equation*}
    \vert\vert F_L(\theta) - Y\vert\vert_2 < \epsilon
\end{equation*}
w.p. at least 
\begin{align*}
    & 1 - N(N-1)\exp\left(-
\Omega\left(
\frac{\min_{l \in [0,k]}n_l}{s^kN^2\prod_{l=1}^{k-1}Log(n_l)}
\right)
\right) \\
&- N\sum_{l=0}^k2\exp(-\Omega(n_l)).
\end{align*}
\end{Theorem}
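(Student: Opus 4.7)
The plan is to leverage the NTK spectral gap established in Theorem \ref{main_result_ntk} (specialized to the single wide layer $k$ hypothesized here) and translate it, via the Polyak--Łojasiewicz (PL) inequality \eqref{ntk_loss}, into convergence of gradient flow on the mean-squared-error loss $\mathcal{L}(\theta) = \frac{1}{2}\|F_L(\theta) - Y\|_2^2$. This follows the blueprint of \cite{nguyen2021tight} with the modifications needed for the smooth activation $\phi(x) = \cos(sx)$.

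First, I would apply Theorem \ref{main_result_ntk} at the random initialization $\theta_0$; under the width hypothesis this yields $\lambda_{\min}(K_L(\theta_0)) \geq \lambda_0 > 0$ with exactly the probability claimed in the present statement (only the contribution from the wide layer $k$ is used, which is why the union bound over layers drops off relative to Theorem \ref{main_result_ntk}). Plugging into \eqref{ntk_loss} gives the PL condition $\|\nabla\mathcal{L}(\theta_0)\|_2^2 \geq 2\lambda_0 \mathcal{L}(\theta_0)$. Since every derivative of $\phi(x) = \cos(sx)$ is uniformly bounded by $s^m$, the map $\theta \mapsto K_L(\theta)$ is locally Lipschitz with an explicit constant depending on the widths and the variances $\beta_k$; this produces a radius $r > 0$ on which $\lambda_{\min}(K_L(\theta)) \geq \lambda_0/2$ for every $\theta \in B(\theta_0, r)$. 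Inside this ball the PL property forces exponential decay of $\mathcal{L}$ along the gradient flow $\dot\theta(t) = -\nabla\mathcal{L}(\theta(t))$.

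The central obstacle is confining the entire gradient-flow trajectory inside $B(\theta_0, r)$, so that the PL decay extends to all $t \geq 0$. A standard bookkeeping argument bounds the trajectory length by
\begin{equation*}
\|\theta(t) - \theta_0\|_2 \leq \frac{2\sqrt{2\mathcal{L}(\theta_0)} \cdot \|J(\theta_0)\|_{\mathrm{op}}}{\lambda_0},
\end{equation*}
and it remains to verify that the right-hand side is at most $r$. This requires (i) an upper bound on $\|J(\theta_0)\|_{\mathrm{op}}$ via the uniform boundedness of $\phi'$ together with concentration of Gaussian matrices, and (ii) polynomial-in-widths control of $\mathcal{L}(\theta_0) = \frac{1}{2}\|F_L(\theta_0) - Y\|_2^2$, which reduces to estimating $\|F_L(\theta_0)\|_2$ via Lemma \ref{lemmac1_cos}. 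Pairing these with the $\Theta(n_k^{3/2})$ scaling of $\lambda_0$ furnished by Theorem \ref{main_result_ntk} gives enough slack at widths $n_k \geq N$ for the trajectory to remain inside $B(\theta_0, r)$. This is the step where the periodic activation differs most visibly from ReLU: bounded derivatives buy us smoothness of the NTK for free, but one must be careful that the local Lipschitz constant of $K_L$ does not eat up the spectral gap.

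Once containment is established, $\mathcal{L}(\theta(t)) \leq e^{-\lambda_0 t}\mathcal{L}(\theta_0)$ holds for all $t \geq 0$. Given any $\epsilon > 0$, choosing $T_\epsilon = O(\log(\mathcal{L}(\theta_0)/\epsilon^2)/\lambda_0)$ and taking $\theta := \theta(T_\epsilon)$ yields $\|F_L(\theta) - Y\|_2 < \epsilon$. The probability bound in the statement is inherited verbatim from the single high-probability event at initialization in Theorem \ref{main_result_ntk}.
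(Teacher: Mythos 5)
Your proposal takes a genuinely different route from the paper, and in fact attempts to prove a strictly stronger statement than what the theorem asserts. The paper's proof is non-constructive: it uses Theorem \ref{main_result_ntk} only to conclude that the Jacobian $J(\theta)$ is full rank $N$ on a positive-measure set $\mathcal{R}$ of parameters, picks some $\theta_0 \in \mathcal{R}$, solves $J(\theta_0)\widetilde\theta = Y$ by surjectivity, and then observes that the difference quotient $g_h(x_i) = h^{-1}f_L(\theta_0 + h\widetilde\theta, x_i)$ converges to $y_i$ and is itself implementable by a depth-$L$ network of widths $\{2n_1,\dots,2n_L\}$. No optimization dynamics appear; the theorem is purely existential, and the probability event is exactly the full-rank event from Theorem \ref{main_result_ntk}. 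Your plan instead asserts convergence of gradient flow, which would be a much stronger conclusion and is not what the theorem claims.

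The gap in your argument is the trajectory-containment step, which you flag but do not close. You need the radius $r$ on which $\lambda_{\min}(K_L(\theta)) \ge \lambda_0/2$ to exceed the trajectory-length bound $\Theta\big(\sqrt{\mathcal{L}(\theta_0)/\lambda_0}\big)$. Under the hypotheses of the theorem the only width requirement is $n_k \ge N$ — linear in the number of samples. At this scaling, the standard NTK-regime containment argument does not go through without substantially larger (polynomial) overparameterization, because $\mathcal{L}(\theta_0)$ scales with $N$ and with the network output norm (which Lemma \ref{lemmac1_cos} shows grows with the widths), while $r$ is controlled by the Lipschitz constant of $\theta \mapsto K_L(\theta)$, which also grows with widths; the resulting inequality does not obviously hold just from $\lambda_0 = \Theta(n_k^{3/2})$ and $n_k \ge N$. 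You would need to produce concrete, compatible bounds for $\|J(\theta_0)\|_{\mathrm{op}}$, $\mathcal{L}(\theta_0)$, and the local Lipschitz constant of $K_L$, and show they fit; none of this is in the proposal. Moreover, each of these auxiliary estimates is a separate high-probability event, so the claim that the probability bound is ``inherited verbatim'' is not justified — the paper's probability bound matches only because the paper's proof uses exactly one event (full rank of $J$). In short, the non-constructive route the paper takes is not a stylistic choice: it is what makes the theorem provable at linear width, whereas your dynamics-based route, as sketched, would not establish the result under the stated hypotheses.
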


\begin{proof}
Let $F_L$ denote the output feature map. The dimension of the parameter space is given by $p = \sum_{l=1}^Ln_l\times n_{l-1}$. As we have fixed the training samples, the neural output feature map can be viewed as a map
\begin{equation*}
F_L : \R^p \rightarrow \R^N.
\end{equation*}
Taking its derivative with respect to a fixed $\theta \in \R^p$, 
we get a linear map
\begin{equation*}
\frac{\partial{F_L}}{\partial{\theta}} : \R^p \rightarrow \R^N.
\end{equation*}

Theorem \ref{main_result_ntk} implies that picking weights according to a Gaussian distribution, satisfying the assumptions in Section \ref{notations}, the map
$\frac{\partial{F_L}}{\partial{\theta}}$ will be a full rank linear map 
w.p. at least
\begin{align*}
    & 1 - N(N-1)\exp\left(-
\Omega\left(
\frac{\min_{l \in [0,k]}n_l}{s^kN^2\prod_{l=1}^{k-1}Log(n_l)}
\right)
\right) \\
&\hspace{0.5cm}- N\sum_{l=1}^k2\exp(-\Omega(n_l)).
\end{align*}

We define 
\begin{equation}\label{rank_set}
\mathcal{R} = \{ \theta \in \R^p : Rank(J(\theta)) = N\}.
\end{equation}
Since the measure associated to a Gaussian probability distribution is absolutely continuous with respect to the Lebesgue measure, we see that 
$\mu(\mathcal{R}) > 0$ w.p. at least
\begin{align*}
    & 1 - N(N-1)\exp\left(-
\Omega\left(
\frac{\min_{l \in [0,k]}n_l}{s^k\prod_{l=1}^{k-1}Log(n_l)}
\right)
\right) \\
&\hspace{0.5cm}- N\sum_{l=1}^k2\exp(-\Omega(n_l))
\end{align*}
over the training data, where 
$\mu$ denote the Lebesgue measure.

This implies $\mathcal{R}$ is not a $\mu$-null set and in particular is not empty. Therefore, pick $\theta_0 \in \mathcal{R}$ and observe that since 
$\frac{\partial{F_L}}{\partial{\theta}}\big{\vert}_{\theta_0}$ is full rank, there exists a $\widetilde{\theta} \in \R^p$ such that
\begin{equation}\label{surj_jac}
	\left(
	\frac{\partial{F_L}}{\partial{\theta}}\bigg{\vert}_{\theta_0}
	\right)\left( \widetilde{\theta} \right) = Y.
\end{equation}

Since our feature maps are defined by 
\begin{equation*}
    F_L(\theta) =  [f_L(x_1),\ldots,f_L(x_N)]^T \in \R^N,
\end{equation*}
on writing 
$Y = [y_1,\ldots,y_N]^T \in \R^N$,
it follows that 
\eqref{surj_jac} implies that for $i \in [N]$ we have that
\begin{equation}\label{difference_quot}
y_i = \bigg{\langle} \frac{\partial f_L(\theta, x_i)}{\partial \theta}\bigg
{\vert}_{\theta_0}, \widetilde{\theta}\bigg{\rangle} =
\lim_{h\rightarrow 0}\frac{f_{L}(\theta_0 + 
h\widetilde{\theta}, x_i)}{h}.
\end{equation}

Define $g_h(x_i) = \frac{f_{L}(\theta_0 + 
h\widetilde{\theta}, x_i)}{h}$. Then observe that given $\epsilon > 0$, there exists $0 < h_i << 1$, such that $h_i = h_i(\epsilon)$, with
\begin{equation*}
\vert g_{h_i}(x_i) - y_i\vert^2 < \frac{\epsilon^2}{N}
\end{equation*}
for each $i \in [N]$. In particular, taking $h = \min_{i \in [N]}\{h_i\}$ we have that
\begin{equation*}
\sqrt{\sum_{i=1}^N \vert g_{h}(x_i) - y_i\vert^2} < \epsilon.
\end{equation*}
Finally observe that by formula \eqref{difference_quot}
the function $g_{h}(x_i)$ can be implemented by a neural network with depth $L$ with widths $\{2n_1,\ldots,2n_L\}$. The result then follows.
\end{proof}

\section{Minimum Singular Value of the Feature Matrix}\label{sec_feature}


In this section, we derive bounds on the minimum singular value of the feature matrices associated to the network. We consider a set of i.i.d data samples 
$\{x_i\}_{i=1}^N$, drawn from distribution $\mathcal{P}$, which is assumed to satisfy assumptions A1-A3. The 
feature matrix at layer $k$ is given by 
$F_k = [f_k(x_1),\ldots.f_k(x_N)]^T \in \R^{N\times n_k}$. For the following theorem, we assume assumption A4.

\begin{Theorem}\label{sing_val_feat}
Let $f_L$ denote a neural network of depth $L$ with activation function $\phi(x) = cos(sx)$, where $s > 0$ is a fixed frequency parameter. We assume the following conditions hold:
\begin{align*}
    n_k &\geq N \\
    \prod_{l=1}^{k-1}Log(n_l) &= o\left(\min_{l \in [0,k]}n_l\right).
\end{align*}
The minimum singular value of the feature matrix $F_k$, denoted 
$\sigma_{\min}(F_{k})$, satisfies the following bound
\begin{align*}
    \sigma_{\min}\left(
F_k
\right)^2
= 
\Theta\left(
	\sqrt{n_0}\beta_kn_k\prod_{l=1}^{k-1}\sqrt{\beta_l}\sqrt{n_l}
	\right)
\end{align*}
w.p. at least
\begin{align*}
    &1 - N(N-1)\exp\left(-
\Omega\left(
\frac{\min_{l \in [0,k]}n_l}{s^kN^2\prod_{l=1}^{k-1}Log(n_l)}
\right)
\right) \\
&\hspace{3cm}- N\sum_{l=1}^k2\exp(-\Omega(n_l))
\end{align*}
\end{Theorem}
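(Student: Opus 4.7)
The plan is to bound $\lambda_{\min}(F_k F_k^T)$ below by establishing diagonal dominance of the Gram matrix. Writing $F_k F_k^T = D_k + E_k$, where $D_k = \mathrm{diag}(\|f_k(x_i)\|_2^2)_{i=1}^N$ and $E_k$ collects the off-diagonal cross-products $\langle f_k(x_i), f_k(x_j)\rangle$ for $i \neq j$, the Gershgorin disc theorem applied to the symmetric PSD matrix $F_k F_k^T$ gives
\[\lambda_{\min}(F_k F_k^T) \geq \min_i \|f_k(x_i)\|_2^2 - \max_i \sum_{j \neq i} |\langle f_k(x_i), f_k(x_j)\rangle|.\]
Lemma \ref{lemmac1_cos} controls each diagonal entry at the target scale $A_k := \sqrt{n_0}\,\beta_k n_k \prod_{l=1}^{k-1}\sqrt{\beta_l n_l}$, and a union bound over $i \in [N]$ produces the $N\sum_l \exp(-\Omega(n_l))$ failure term. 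The substantial work is then to show that each off-diagonal entry is of strictly smaller order than $A_k/N$, uniformly over the $\binom{N}{2}$ pairs.

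For the off-diagonals I would condition on $W_1,\ldots,W_{k-1}$ and exploit independence of the rows of $W_k$. For each row $r$, the pair $(s\langle W_{k,r}, f_{k-1}(x_i)\rangle,\, s\langle W_{k,r}, f_{k-1}(x_j)\rangle)$ is centered jointly Gaussian with covariance determined by $\|f_{k-1}(x_i)\|_2^2$, $\|f_{k-1}(x_j)\|_2^2$ and $\langle f_{k-1}(x_i), f_{k-1}(x_j)\rangle$. The identity $E[\cos(U)\cos(V)] = \tfrac{1}{2}\bigl(e^{-\mathrm{Var}(U-V)/2} + e^{-\mathrm{Var}(U+V)/2}\bigr)$ for centered jointly Gaussian $(U,V)$ then shows that the conditional expectation of $\langle f_k(x_i), f_k(x_j)\rangle$ given $W_1,\ldots,W_{k-1}$ is exponentially damped in $s^2 \beta_k^2$ times the relevant squared-norms and hence of order strictly smaller than $A_k$ for distinct $x_i, x_j$.

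The main obstacle is upgrading this conditional-expectation estimate to a high-probability bound that survives the union bound over pairs and over the randomness of the earlier layers. I would combine (i) Gaussian Lipschitz concentration in $W_k$, with Lipschitz constant estimated in terms of $\|f_{k-1}(x_\cdot)\|_2$, and (ii) the data Lipschitz concentration inequality of Assumption A3 applied with the Lipschitz constant of $f_{k-1}$ supplied by Assumption A4, in order to concentrate $\|f_{k-1}(x)\|_2$ and $\langle f_{k-1}(x_i), f_{k-1}(x_j)\rangle$ around their data-averages. Propagating these constants layer-by-layer is what introduces the $s^k \prod_{l=1}^{k-1}\mathrm{Log}(n_l)$ factor inside the exponent, and the hypothesis $\prod_{l=1}^{k-1}\mathrm{Log}(n_l) = o(\min_l n_l)$ is exactly the condition that makes the exponent diverge. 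A union bound over the $N(N-1)$ pairs then produces the $N(N-1)\exp(-\Omega(\min_l n_l/(s^k N^2 \prod_l \mathrm{Log}(n_l))))$ probability term, and combining with the diagonal estimate yields $\lambda_{\min}(F_k F_k^T) = \Theta(A_k)$. The matching upper bound in the $\Theta$ statement is immediate from $\lambda_{\min}(F_k F_k^T) \leq \min_i \|f_k(x_i)\|_2^2$ together with Lemma \ref{lemmac1_cos}. Conceptually, the hardest step is the Lipschitz bookkeeping needed to adapt the ReLU-Hermite analysis of \cite{nguyen2021tight} to the cosine setting, where the Gaussian MGF identity for products of cosines cleanly replaces their Hermite expansion step and makes the exponential off-diagonal decay explicit.
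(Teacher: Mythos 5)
Your high-level skeleton (Gershgorin on a Gram matrix, diagonal from norm concentration, off-diagonal pushed below $A_k/N$, then a union bound) matches the paper, but you apply Gershgorin to $F_kF_k^T$ directly, whereas the paper applies it to the \emph{centered} Gram $\widetilde{F}_k\widetilde{F}_k^T$, with $\widetilde{F}_k = F_k - \mathbb{E}_X[F_k]$, after first establishing the positive-semidefinite inequality
$F_kF_k^T \geq \widetilde{F}_k\widetilde{F}_k^T - \Lambda\mathbf{1}_N\mathbf{1}_N^T\Lambda/\|\mu\|_2^2$
(Lemma \ref{centfeat_est}) and peeling off the rank-one correction by Weyl. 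That centering step is not cosmetic, and skipping it introduces a real gap.

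The reason the paper centers is that the off-diagonal entries of $F_kF_k^T$ do \emph{not} have small data-mean: $\mathbb{E}_{x_i}\langle f_k(x_i),f_k(x_j)\rangle = \langle\mu,f_k(x_j)\rangle$, which is of order $\|\mu\|_2^2 = \Theta(n_0\prod_l\beta_l^2 n_l)$ by Lemma \ref{c.2}. After centering, the off-diagonals $\langle\widetilde{f}_k(x_i),\widetilde{f}_k(x_j)\rangle$ have exact zero mean in $x_i$ conditional on $x_j$, so the Lipschitz-concentration assumption A3 drives them to $o(A_k/N)$ \emph{uniformly}, with no further structural input. Your replacement for centering is the Gaussian identity $\mathbb{E}[\cos U\cos V] = \tfrac12(e^{-\mathrm{Var}(U-V)/2}+e^{-\mathrm{Var}(U+V)/2})$, which gives exponential damping governed by $s^2\beta_k^2\|f_{k-1}(x_i)-f_{k-1}(x_j)\|_2^2$. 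But this exponent is not guaranteed to diverge under the paper's assumptions: with, say, $\beta_k \asymp 1/\sqrt{n_{k-1}}$ one has $\beta_k^2\|f_{k-1}(x_i)-f_{k-1}(x_j)\|_2^2 = \Theta(1)$, so each off-diagonal has $W_k$-conditional mean $\Theta(n_k)$, and the Gershgorin lower bound on the \emph{uncentered} Gram becomes $\Omega(A_k) - N\cdot\Theta(n_k)$, which is negative once $N$ is moderately large. The centered argument does not see this obstruction at all. Moreover, lower-bounding $\|f_{k-1}(x_i)-f_{k-1}(x_j)\|_2^2$ itself requires controlling $\langle f_{k-1}(x_i),f_{k-1}(x_j)\rangle$ at the previous layer, i.e. a recursion of exactly the type you are trying to solve, which your sketch does not unfold into a base case. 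The cosine-MGF damping is a genuinely nice observation and could be useful as a \emph{supplementary} estimate in a large-$s\beta_k$ regime, but by itself it cannot replace the centering-plus-rank-one-correction structure that the paper's proof relies on, and your proof as written would not close in the general parameter range of the theorem.
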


The proof of Theorem \ref{sing_val_feat} is provided in Section
\ref{sec_proof_sing_val_feat}.

\section{Experiments}\label{exps}
In this section, we provide experimental evidence to support our theoretical findings. In particular, we measure the scaling of the minimum eigenvalue of the empirical NTK matrix for both cosine and ReLU-activated networks, which are compared to the theoretical predictions in Section \ref{subsec:exps_ntk}. We also evaluate the assumption A4 in Section \ref{subsec:exps_lipschitz}

\subsection{NTK experiments}\label{subsec:exps_ntk}


The prediction made by Theorem \ref{main_result_ntk} suggests that if a cosine activated neural network has one wide layer of width $n_k$, for $1 \leq k \leq L-1$, 
then as we increase $n_k$ while keeping all other widths fixed,
the minimum eigenvalue of the empirical NTK matrix is bounded below by a term of the 
form $\Omega(n_k^{3/2})$. 
This section aims to experimentally verify this theoretical
prediction and contrast it with the prediction made for a deep ReLU network from 
\cite{nguyen2021tight}, which shows that the minimum eigenvalue of a ReLU network would scale linearly.


\paragraph{NTK analysis where $n_1 = 8N$.} In Fig.~\ref{fig:ntk1}, we compare a 3-layer cosine activated neural network with a 3-layer ReLU-activated neural network. We fixed the widths of the input and output layers as $(n_0, n_2)$ and let the width of the middle layer, $n_1$, vary according to the relation $n_1 = 8N$. The cosine activated network used a fixed frequency parameter $s = 5$. Both networks were initialized using He's initialisation, where the weights
$(W_l)_{i,j} \sim \mathcal{N}(0, \frac{2}n_{l-1})$. We then plotted the minimum eigenvalue of the
empirical NTK
$\lambda_{\min}(K_3)$ of both networks, and compared them to curves of the form
$\mathcal{O}((8x)^{1.5})$ and $\mathcal{O}(8x)$.
As predicted by Theorem \ref{main_result_ntk}, we observed that the minimum eigenvalue $\lambda_{\min}(K_3)$ grew faster than $\mathcal{O}((8x)^{1.5})$, and as predicted by 
\cite{nguyen2021tight} the ReLU network grew faster than $\mathcal{O}(8x)$. 


Furthermore, Theorem \ref{main_result_ntk} predicts that the minimum eigenvalue $\lambda_{\min}(K_3)$ for a cosine activated network should grow at a faster rate than a ReLU-activated network. 
Fig.~\ref{fig:ntk1} clearly shows that the minimum eigenvalue of the cosine network grows much faster than the ReLU network.

\begin{figure}[t]
\includegraphics[width=1.0\linewidth]{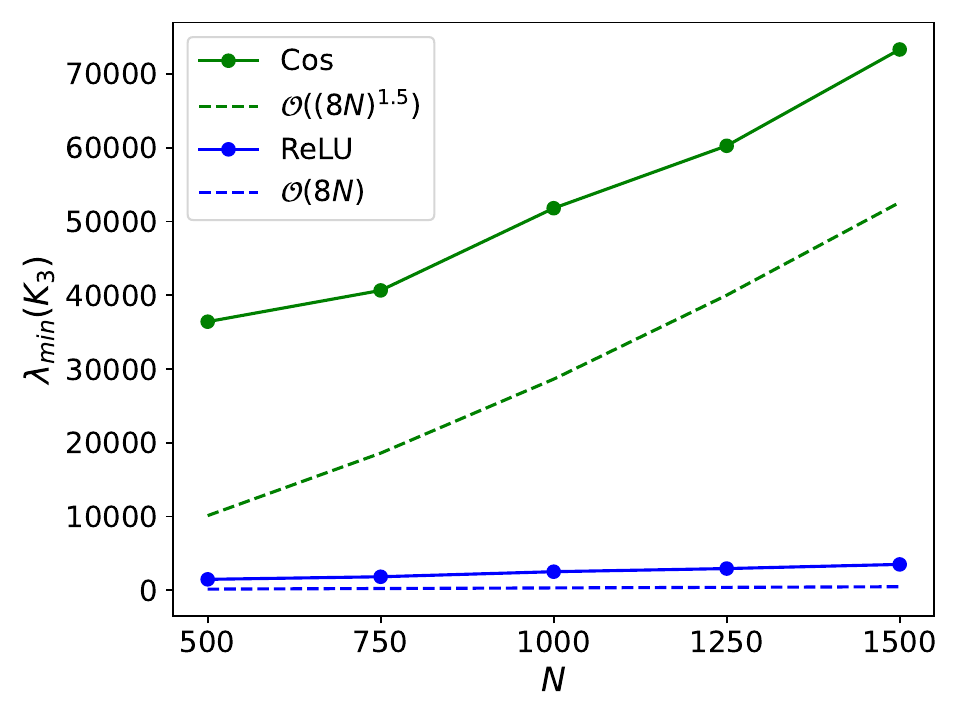}
\hspace{-0.4cm}
\caption{The minimum eigenvalue of empirical NTK $\lambda_{min}(K_3)$ where $n_0 = 400$, $n_{1} = 8N$,  and $n_2=400$. As predicted by Theorem \ref{main_result_ntk}, $\lambda_{min}(K_3)$ for a cosine activated network grows much faster than a ReLU-activated network. }\label{fig:ntk1}
\end{figure}


\paragraph{NTK analysis where $n_1=15N$.} We conducted a follow-up experiment using a larger scaling for the width of $n_1$, where $n_1 = 15N$, while keeping all other parameters constant. Fig.~\ref{fig:ntk2} clearly demonstrates that the prediction of Theorem \ref{main_result_ntk} holds, furthermore the gap between the growth of the cosine and ReLU networks is even more pronounced.

\begin{figure}[t]
\includegraphics[width=1.0\linewidth]{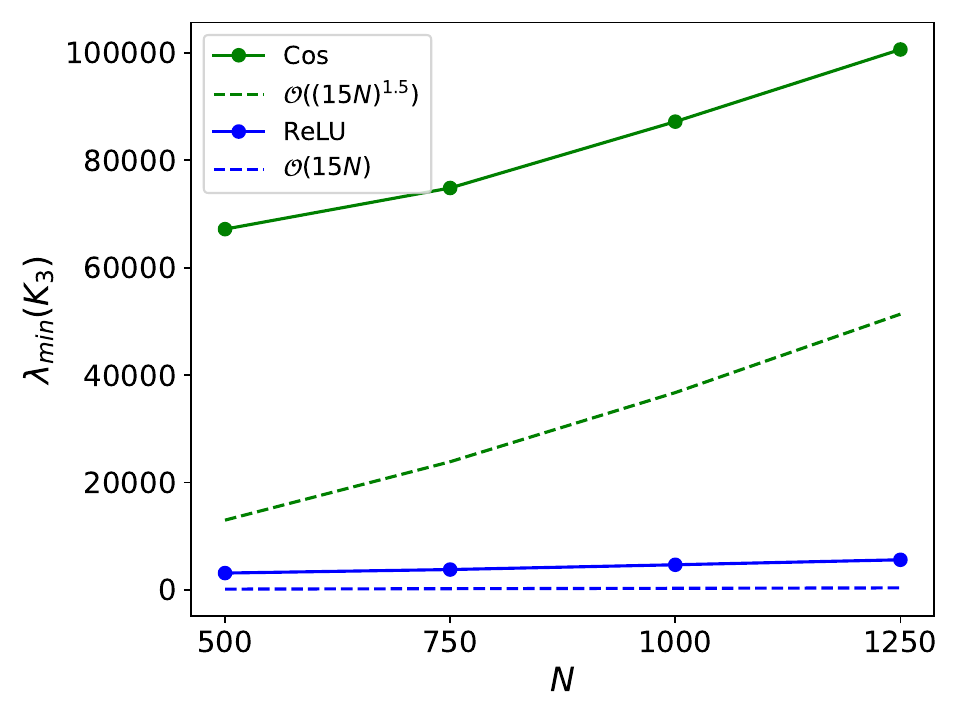}
\hspace{-0.4cm}
\caption{The minimum eigenvalue of empirical NTK $\lambda_{min}(K_3)$ where $n_0=400$, $n_{1} = 15N$, 
 and $n_2 = 400$. As predicted by Theorem \ref{main_result_ntk}, $\lambda_{min}(K_L)$ for a cosine activated network grows much faster than a ReLU-activated network.}\label{fig:ntk2}
\end{figure}

\subsection{Empirical Lipschitz constant}\label{subsec:exps_lipschitz}


In this experiment, we empirically verified the assumption A4. The Lipschitz constant of the $k$-layer function $f_k$
can be expressed as the supremum, over each point in data space, of the operator norm of the Jacobian matrix as
\begin{equation*}
    \vert\vert f_k\vert\vert_{Lip} = \sup_{x \in \R^{n_0}}\vert\vert 
    J(f_k)(x)\vert\vert_{op}.
\end{equation*}
The exact computation of the Lipschitz constant of a deep network is considered as an NP-hard problem~\cite{virmaux2018lipschitz}. Therefore, for our experiment, we will consider the empirical Lipschitz constant. We obtain a sampled data set $X$, sampled from a fixed distribution $\mathcal{P}$; see Section \ref{notations}. 
We then define the empirical Lipschitz constant of $f_k$ as
\begin{equation*}
    \vert\vert f_k\vert\vert_{ELip} = \max_{x \in X}\vert\vert 
    J(f_k)(x)\vert\vert_{op}.
\end{equation*}
Note that the empirical Lipschitz constant of $f_k$ is a lower bound for the true Lipschitz constant of $f_k$.

\paragraph{Empirical Lipschitz constant of a cosine activated network.} We computed the empirical Lipschitz constant of a $4-$layer cosine activated network which has a fixed frequency $s=30$, $f_3$ over $1000$ data points, drawn from a Gaussian distribution
$\mathcal{N}(0, 1)$. 
The widths of the layers were fixed as $n_1 = n_2 = 64$, $n_4 = 1$, and $n_3$ was varied from $64$ to $2048$. We considered two different data dimensions, namely 
$n_0 = 200$ and $n_0 = 400$.
Fig.~\ref{fig:lipschitz_constant_cos} clearly shows that the empirical Lipschitz constant grows much slower with width than a term that grows $\mathcal{O}(n_3^{1/2})$. 
This empirically supports the assumption A4 and demonstrates that the bound given in assumption is an extremely loose bound.

\begin{figure}
\includegraphics[width=1.0\linewidth]{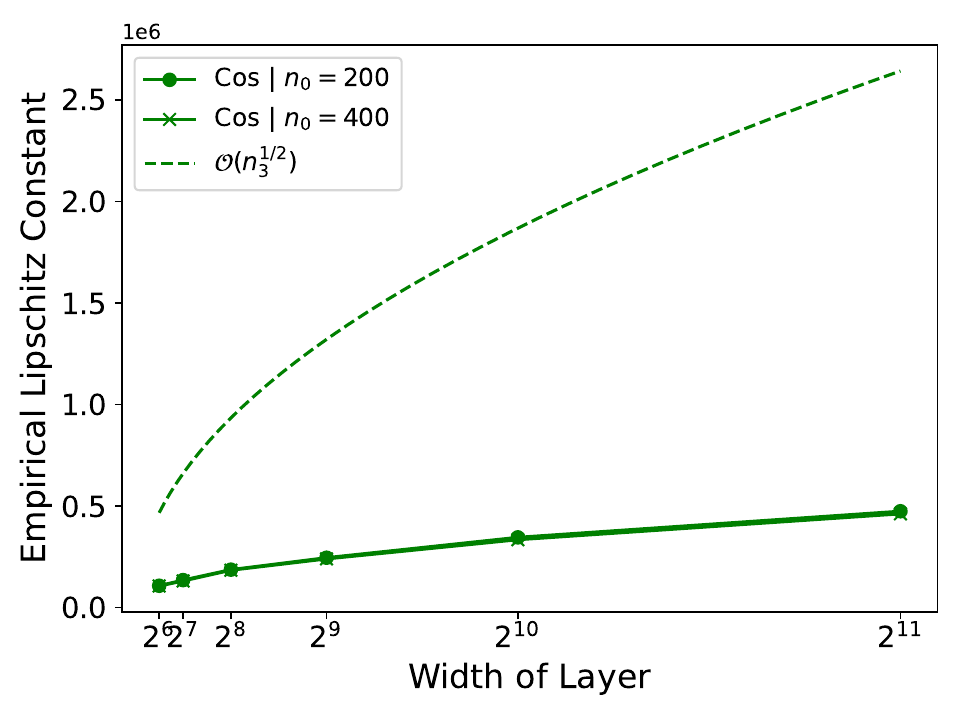}
\hspace{-0.4cm}
\caption{The empirical Lipschitz constant of a cosine activated network over $1000$ data points, where $n_1 = n_2 = 64$, $n_4 = 1$ and $n_3$ varying from 64 to 2048,  when $n_0 = 200$ and $n_0 =400$. This plot empirically confirms the assumption A4.  }\label{fig:lipschitz_constant_cos}
\end{figure}


\paragraph{Comparison of empirical Lipschitz constant of a cosine and a ReLU-activated network.}We compared the empirical Lipschitz constant of a cosine activated neural network with a ReLU-activated network. We note that a ReLU-activated network is only differentiable outside a set of measure zero. Hence, the Jacobian of $f_k$ will represent the true Jacobian only at points where $f_k \neq 0$. 
We used the same parameters as the previous experiment.
Fig.~\ref{fig:lipschitz_constant_all} shows that a cosine activated network has a much larger empirical Lipschitz constant when compared to a ReLU-activated network.

\begin{figure}
\includegraphics[width=1.0\linewidth]{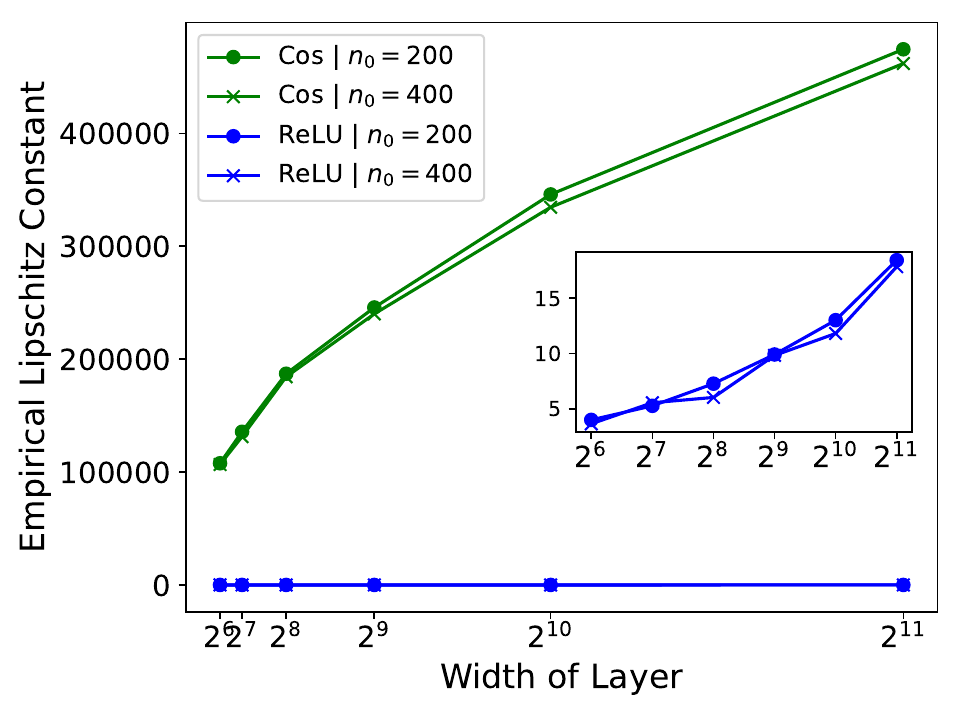}
\hspace{-0.2cm}
\caption{The empirical Lipschitz constant of cosine and ReLU-activated networks over $1000$ data points, where $n_1 = n_2 = 64$, $n_4 = 1$ and $n_3$ varying from 64 to 2048,  when $n_0 = 200$ and $n_0 =400$. \textit{Zoom inset}: The empirical Lipschitz constant of ReLU-activated network.}\label{fig:lipschitz_constant_all}
\end{figure}

\section{Related Work.}\label{related_work}



The application of random matrix theory to deep learning has seen surge of interest in recent years. Research connecting the generalization error to the spectrum of random matrices has been carried out in \cite{hastie2022surprises, gerace2020generalisation, liao2020random, montanari2022interpolation}. The paper by \cite{pennington2018emergence} applied random matrix theory to investigate the spectrum of the input-output Jacobian of a network. Additionally, random matrix techniques have been utilized in various settings to study the spectrum of the conjugate kernel \cite{pennington2017nonlinear, liao2018spectrum}.

While these works have applied techniques from random matrix theory to problems in deep learning, the study of the NTK via random matrix methods is an under-explored area. The work \cite{tancik2020fourier} used the NTK to show why conventional ReLU activated networks admit spectral bias and hence cannot learn high frequency components of a signal.
The work by \cite{montanari2022interpolation} provides a lower bound on the smallest eigenvalue of the NTK matrix but is limited to the restrictive setting of a two layer model. To the best of our knowledge, the only papers that have obtained tight 2-sided bounds on the minimum eigenvalue of the NTK matrix of a ReLU-activated network are \cite{nguyen2021tight, nguyen2020global}. These papers consider a more general setting that only requires one wide layer of at least linear growth in the number of training samples, at any position in the network. In contrast, our work contributes to this growing field by focusing specifically on the theoretical study of periodically activated networks.

\section{Discussion and Conclusion.}

In this paper, we delved into the investigation of the empirical NTK of a periodically activated neural network. We established lower and upper bounds on the minimum eigenvalue of the empirical NTK matrix in the finite width setting. 
These bounds work in the general setting where there is at least one wide layer $n_k$ present in the neural network, positioned anywhere between the input and output layers. In such a setting, the main result of our paper implies that the scaling of the minimum eigenvalue value follows $\Theta(n_k^{3/2})$. This is in contrast to previous results obtained for ReLU networks in \cite{nguyen2021tight}, where the scaling was found to be $\Theta(n_k)$. 
Our results shows that at initialization, the spectrum of the empirical NTK of a periodically activated network has a larger spectral gap than a ReLU-activated network, making it better conditioned for training via gradient decent methods, as described in Equation \ref{ntk_loss}. 
We also verified these theoretical predictions through experiments and contrasted them with results for ReLU-activated networks. Finally, we also provided an application to the memorization capacity of periodically activated networks, which generalizes the memorization result of \cite{nguyen2021tight} to the case of periodically activated networks. Overall, our study provides a deeper understanding of the properties of periodically activated neural networks and their impact on deep learning. We should mention that the original paper \cite{sitzmann2020implicit} employed a sinusoidal activated network for many of their applications. Using the angle formula $sin(x) = cos(x + \frac{\pi}{2})$, we see that a sinusoid is nothing but a phase shifted cosine. Hence the results of this paper, using a cosine activation, go through verbatim for a sine activation. 

A limitation of this work is the assumption of a Lipschitz constant bound on the networks being considered, as outlined in assumption A4. Despite this limitation, 
the assumption allows us to uncover valuable insights into the behavior of  periodically activated neural networks which are crucial in explaining the potential advantages they offer over networks using traditional ReLU activation function. This paves the way for further research and development in this area.

Finally, it is worth noting that there have been several works in recent years that have investigated the use of non-traditional activations in neural network, and have shown that they can exhibit superior performance over ReLU activations. For example, \cite{ramasinghe2022beyond} introduced the use of Gaussian activation functions and showed they are more robust to random initializations than periodic activations, when trained with gradient based methods. Another work \cite{chng2022gaussian} applied Gaussian-activated coordinate networks to the problem of reconstructing neural radiance fields. An interesting line of future research would be to analyze the NTK of such networks and compare them to the case of periodic and ReLU-activated networks. Furthermore, in order to ensure convergence guarantees when training with periodically activated networks, it would be necessary to track the minimum eigenvalue of the NTK during the training process, which we do not address in this paper. This presents an intriguing problem for future research. Overall, the study of non-traditional activation functions and their properties is an active and growing area of research, with the potential to lead to new and improved coordinate networks.

\nocite{langley00}

\bibliography{example_paper}

\begin{thebibliography}{30}
\providecommand{\natexlab}[1]{#1}
\providecommand{\url}[1]{\texttt{#1}}
\expandafter\ifx\csname urlstyle\endcsname\relax
  \providecommand{\doi}[1]{doi: #1}\else
  \providecommand{\doi}{doi: \begingroup \urlstyle{rm}\Url}\fi

\bibitem[Allen-Zhu et~al.(2019)Allen-Zhu, Li, and Song]{allen2019convergence}
Allen-Zhu, Z., Li, Y., and Song, Z.
\newblock A convergence theory for deep learning via over-parameterization.
\newblock In \emph{International Conference on Machine Learning}, pp.\  242--252. PMLR, 2019.

\bibitem[Arora et~al.(2019)Arora, Du, Hu, Li, and Wang]{arora2019fine}
Arora, S., Du, S., Hu, W., Li, Z., and Wang, R.
\newblock Fine-grained analysis of optimization and generalization for overparameterized two-layer neural networks.
\newblock In \emph{International Conference on Machine Learning}, pp.\  322--332. PMLR, 2019.

\bibitem[Chen et~al.(2021)Chen, Liu, and Wang]{chen2021learning}
Chen, Y., Liu, S., and Wang, X.
\newblock Learning continuous image representation with local implicit image function.
\newblock In \emph{Proceedings of the IEEE/CVF conference on computer vision and pattern recognition}, pp.\  8628--8638, 2021.

\bibitem[Chng et~al.(2022)Chng, Ramasinghe, Sherrah, and Lucey]{chng2022gaussian}
Chng, S.-F., Ramasinghe, S., Sherrah, J., and Lucey, S.
\newblock Gaussian activated neural radiance fields for high fidelity reconstruction and pose estimation.
\newblock In \emph{Computer Vision--ECCV 2022: 17th European Conference, Tel Aviv, Israel, October 23--27, 2022, Proceedings, Part XXXIII}, pp.\  264--280. Springer, 2022.

\bibitem[Davidson \& Szarek(2001)Davidson and Szarek]{davidson2001local}
Davidson, K.~R. and Szarek, S.~J.
\newblock Local operator theory, random matrices and banach spaces.
\newblock \emph{Handbook of the geometry of Banach spaces}, 1\penalty0 (317-366):\penalty0 131, 2001.

\bibitem[Du et~al.(2019)Du, Lee, Li, Wang, and Zhai]{du2019gradient}
Du, S., Lee, J., Li, H., Wang, L., and Zhai, X.
\newblock Gradient descent finds global minima of deep neural networks.
\newblock In \emph{International conference on machine learning}, pp.\  1675--1685. PMLR, 2019.

\bibitem[Du et~al.(2018)Du, Zhai, Poczos, and Singh]{du2018gradient}
Du, S.~S., Zhai, X., Poczos, B., and Singh, A.
\newblock Gradient descent provably optimizes over-parameterized neural networks.
\newblock \emph{arXiv preprint arXiv:1810.02054}, 2018.

\bibitem[Gerace et~al.(2020)Gerace, Loureiro, Krzakala, M{\'e}zard, and Zdeborov{\'a}]{gerace2020generalisation}
Gerace, F., Loureiro, B., Krzakala, F., M{\'e}zard, M., and Zdeborov{\'a}, L.
\newblock Generalisation error in learning with random features and the hidden manifold model.
\newblock In \emph{International Conference on Machine Learning}, pp.\  3452--3462. PMLR, 2020.

\bibitem[Hastie et~al.(2022)Hastie, Montanari, Rosset, and Tibshirani]{hastie2022surprises}
Hastie, T., Montanari, A., Rosset, S., and Tibshirani, R.~J.
\newblock Surprises in high-dimensional ridgeless least squares interpolation.
\newblock \emph{The Annals of Statistics}, 50\penalty0 (2):\penalty0 949--986, 2022.

\bibitem[Horn et~al.(1994)Horn, Horn, and Johnson]{horn1994topics}
Horn, R.~A., Horn, R.~A., and Johnson, C.~R.
\newblock \emph{Topics in matrix analysis}.
\newblock Cambridge university press, 1994.

\bibitem[Langley(2000)]{langley00}
Langley, P.
\newblock Crafting papers on machine learning.
\newblock In Langley, P. (ed.), \emph{Proceedings of the 17th International Conference on Machine Learning (ICML 2000)}, pp.\  1207--1216, Stanford, CA, 2000. Morgan Kaufmann.

\bibitem[Li et~al.(2022)Li, Li, Sitzmann, Agrawal, and Torralba]{li20223d}
Li, Y., Li, S., Sitzmann, V., Agrawal, P., and Torralba, A.
\newblock 3d neural scene representations for visuomotor control.
\newblock In \emph{Conference on Robot Learning}, pp.\  112--123. PMLR, 2022.

\bibitem[Liao \& Couillet(2018)Liao and Couillet]{liao2018spectrum}
Liao, Z. and Couillet, R.
\newblock On the spectrum of random features maps of high dimensional data.
\newblock In \emph{International Conference on Machine Learning}, pp.\  3063--3071. PMLR, 2018.

\bibitem[Liao et~al.(2020)Liao, Couillet, and Mahoney]{liao2020random}
Liao, Z., Couillet, R., and Mahoney, M.~W.
\newblock A random matrix analysis of random fourier features: beyond the gaussian kernel, a precise phase transition, and the corresponding double descent.
\newblock \emph{Advances in Neural Information Processing Systems}, 33:\penalty0 13939--13950, 2020.

\bibitem[Mildenhall et~al.(2021)Mildenhall, Srinivasan, Tancik, Barron, Ramamoorthi, and Ng]{mildenhall2021nerf}
Mildenhall, B., Srinivasan, P.~P., Tancik, M., Barron, J.~T., Ramamoorthi, R., and Ng, R.
\newblock Nerf: Representing scenes as neural radiance fields for view synthesis.
\newblock \emph{Communications of the ACM}, 65\penalty0 (1):\penalty0 99--106, 2021.

\bibitem[Montanari \& Zhong(2022)Montanari and Zhong]{montanari2022interpolation}
Montanari, A. and Zhong, Y.
\newblock The interpolation phase transition in neural networks: Memorization and generalization under lazy training.
\newblock \emph{The Annals of Statistics}, 50\penalty0 (5):\penalty0 2816--2847, 2022.

\bibitem[Nguyen et~al.(2021)Nguyen, Mondelli, and Montufar]{nguyen2021tight}
Nguyen, Q., Mondelli, M., and Montufar, G.~F.
\newblock Tight bounds on the smallest eigenvalue of the neural tangent kernel for deep relu networks.
\newblock In \emph{International Conference on Machine Learning}, pp.\  8119--8129. PMLR, 2021.

\bibitem[Nguyen \& Mondelli(2020)Nguyen and Mondelli]{nguyen2020global}
Nguyen, Q.~N. and Mondelli, M.
\newblock Global convergence of deep networks with one wide layer followed by pyramidal topology.
\newblock \emph{Advances in Neural Information Processing Systems}, 33:\penalty0 11961--11972, 2020.

\bibitem[Oymak \& Soltanolkotabi(2020)Oymak and Soltanolkotabi]{oymak2020toward}
Oymak, S. and Soltanolkotabi, M.
\newblock Toward moderate overparameterization: Global convergence guarantees for training shallow neural networks.
\newblock \emph{IEEE Journal on Selected Areas in Information Theory}, 1\penalty0 (1):\penalty0 84--105, 2020.

\bibitem[Pennington \& Worah(2017)Pennington and Worah]{pennington2017nonlinear}
Pennington, J. and Worah, P.
\newblock Nonlinear random matrix theory for deep learning.
\newblock \emph{Advances in neural information processing systems}, 30, 2017.

\bibitem[Pennington et~al.(2018)Pennington, Schoenholz, and Ganguli]{pennington2018emergence}
Pennington, J., Schoenholz, S., and Ganguli, S.
\newblock The emergence of spectral universality in deep networks.
\newblock In \emph{International Conference on Artificial Intelligence and Statistics}, pp.\  1924--1932. PMLR, 2018.

\bibitem[Ramasinghe \& Lucey(2022)Ramasinghe and Lucey]{ramasinghe2022beyond}
Ramasinghe, S. and Lucey, S.
\newblock Beyond periodicity: Towards a unifying framework for activations in coordinate-mlps.
\newblock In \emph{Computer Vision--ECCV 2022: 17th European Conference, Tel Aviv, Israel, October 23--27, 2022, Proceedings, Part XXXIII}, pp.\  142--158. Springer, 2022.

\bibitem[Schur(1911)]{schur1911bemerkungen}
Schur, J.
\newblock Bemerkungen zur theorie der beschr{\"a}nkten bilinearformen mit unendlich vielen ver{\"a}nderlichen.
\newblock 1911.

\bibitem[Sitzmann et~al.(2020)Sitzmann, Martel, Bergman, Lindell, and Wetzstein]{sitzmann2020implicit}
Sitzmann, V., Martel, J., Bergman, A., Lindell, D., and Wetzstein, G.
\newblock Implicit neural representations with periodic activation functions.
\newblock \emph{Advances in Neural Information Processing Systems}, 33:\penalty0 7462--7473, 2020.

\bibitem[Skorokhodov et~al.(2021)Skorokhodov, Ignatyev, and Elhoseiny]{skorokhodov2021adversarial}
Skorokhodov, I., Ignatyev, S., and Elhoseiny, M.
\newblock Adversarial generation of continuous images.
\newblock In \emph{Proceedings of the IEEE/CVF Conference on Computer Vision and Pattern Recognition}, pp.\  10753--10764, 2021.

\bibitem[Song \& Yang(2019)Song and Yang]{song2019quadratic}
Song, Z. and Yang, X.
\newblock Quadratic suffices for over-parametrization via matrix chernoff bound.
\newblock \emph{arXiv preprint arXiv:1906.03593}, 2019.

\bibitem[Tancik et~al.(2020)Tancik, Srinivasan, Mildenhall, Fridovich-Keil, Raghavan, Singhal, Ramamoorthi, Barron, and Ng]{tancik2020fourier}
Tancik, M., Srinivasan, P., Mildenhall, B., Fridovich-Keil, S., Raghavan, N., Singhal, U., Ramamoorthi, R., Barron, J., and Ng, R.
\newblock Fourier features let networks learn high frequency functions in low dimensional domains.
\newblock \emph{Advances in Neural Information Processing Systems}, 33:\penalty0 7537--7547, 2020.

\bibitem[Vershynin(2018)]{vershynin2018high}
Vershynin, R.
\newblock \emph{High-dimensional probability: An introduction with applications in data science}, volume~47.
\newblock Cambridge university press, 2018.

\bibitem[Virmaux \& Scaman(2018)Virmaux and Scaman]{virmaux2018lipschitz}
Virmaux, A. and Scaman, K.
\newblock Lipschitz regularity of deep neural networks: analysis and efficient estimation.
\newblock \emph{Advances in Neural Information Processing Systems}, 31, 2018.

\bibitem[Zou \& Gu(2019)Zou and Gu]{zou2019improved}
Zou, D. and Gu, Q.
\newblock An improved analysis of training over-parameterized deep neural networks.
\newblock \emph{Advances in neural information processing systems}, 32, 2019.

\end{thebibliography}
\bibliographystyle{icml2023}

\newpage
\appendix
\onecolumn
\section{Preliminary lemmas.}\label{app_prelims}

In this section of the appendix we prove several lemmas that are crucial for the probabilistic analysis of cosine activated neural networks. We will fix a depth $L$ cosine activated neural network $f_l$ satisfying assumptions A1-A4, 
see Section \ref{notations}. 
We will assume that all the Gaussian integrals we perform are normalised so that a factor of $\sqrt{\pi}$ does not show. In other words, an integral of the form
$\int_{\R}e^{-x^2}dx$ will be have be normalised to 
$\frac{1}{\sqrt{pi}}\int_{\R}e^{-x^2}dx$. We note that this does not affect any of our results as all the results are asymptotic results and hence the factor of $\sqrt{\pi}$
can be absorbed into a constant. Furthermore, in many of the proofs positive constants
$C > 0$ will arise, such constants may change from line to line. Again this is not a cause for concern as our analysis is in the asymptotic regime.

We will also at times need to make use of the sub-exponential and sub-Gaussian norms, which we now describe.
Given a sub-exponential random variable $X$, define 
\begin{equation*}
    \vert\vert X\vert\vert_{\psi_1} = \inf\{t > 0: 
    \mathbb{E}[exp(\vert X\vert/t)] \leq 2\}.
\end{equation*}
Given a sub-Gaussian random variable $X$ define the sub-Gaussian norm
\begin{equation*}
    \vert\vert X\vert\vert_{\psi_2} = \inf\{t > 0: 
    \mathbb{E}[exp(\vert X^2\vert/t^2)] \leq 2\}.
\end{equation*}

\begin{lemma}\label{lemmac1_cos}
Fix $0 \leq k \leq L-1$ and 
assume $x \sim \mathcal{P}$. Then
\begin{equation*}
	\vert\vert f_k(x)\vert\vert_2^2 = \Theta\bigg{(}s\cdot \sqrt{n_0}
	\prod_{l=1}^{k-1}\sqrt{\beta_l}\sqrt{n_l}\beta_k n_k\bigg{)}
\end{equation*}
w.p. $\geq 1 - \sum_{l=1}^k2exp(-\Omega(sn_l)) - 2exp(-\Omega(\sqrt{n_0}))$ over
$(W_l)_{l=1}^k$ and $x$.

Furthermore 
\begin{equation*}
	\mathbb{E}_{x\sim \mathcal{P}}\vert\vert f_k(x)\vert\vert^2_2 = 
	\Theta\bigg{(}s\cdot \sqrt{n_0}
	\prod_{l=1}^{k-1}\sqrt{\beta_l}\sqrt{n_l}\beta_k n_k\bigg{)}
\end{equation*} 
w.p. $\geq 1 - \sum_{l=1}^k2exp(-\Omega(sn_l))$ over
$(W_l)_{l=1}^k$.
\end{lemma}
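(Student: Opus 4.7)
The plan is to proceed by induction on the layer index $k$, combining two ingredients: Gaussian moment identities for cosine of a centred Gaussian, and standard concentration, namely Hoeffding or a sub-exponential Bernstein bound for the bounded $\cos^2$ terms and the Lipschitz concentration assumption A3 for the data norm.

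For the base case $k=0$, one has $f_0(x) = x$, so $\|f_0(x)\|_2^2 = \|x\|_2^2$. Assumption A2 immediately gives $\mathbb{E}_{x\sim\mathcal{P}}\|x\|_2^2 = \Theta(n_0)$, which handles the in-expectation statement. For the high-probability statement, A1 gives $\mathbb{E}\|x\|_2 = \Theta(\sqrt{n_0})$ and A3 applied to the $1$-Lipschitz function $x \mapsto \|x\|_2$ yields $\|x\|_2 = \Theta(\sqrt{n_0})$, hence $\|x\|_2^2 = \Theta(n_0)$, with failure probability at most $2\exp(-\Omega(\sqrt{n_0}))$.

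For the inductive step, suppose the result holds at layer $k-1$ and condition on the high-probability event that $\|f_{k-1}(x)\|_2^2$ matches the inductive expression. Writing
\begin{equation*}
\|f_k(x)\|_2^2 \;=\; \sum_{j=1}^{n_k}\cos^2\bigl(s\,\langle (W_k)_{:,j}, f_{k-1}(x)\rangle\bigr),
\end{equation*}
the pre-activations $z_j := \langle (W_k)_{:,j}, f_{k-1}(x)\rangle$ are, conditionally on $f_{k-1}$, i.i.d.\ $\mathcal{N}(0,\beta_k^2\|f_{k-1}\|_2^2)$. The Gaussian characteristic function identity $\mathbb{E}[e^{itZ}] = e^{-t^2\sigma^2/2}$ for $Z\sim\mathcal{N}(0,\sigma^2)$ then gives
\begin{equation*}
\mathbb{E}\bigl[\cos^2(s z_j)\,\big|\,f_{k-1}\bigr] \;=\; \tfrac{1}{2} + \tfrac{1}{2}\,e^{-2 s^2\beta_k^2\|f_{k-1}\|_2^2}.
\end{equation*}
Since $\cos^2 \in [0,1]$, Hoeffding applied to the sum over $n_k$ i.i.d.\ bounded variables gives concentration around $n_k$ times this conditional mean with failure probability $2\exp(-\Omega(n_k))$; tracking the conditional variance of $\cos^2(s z_j)$ in a sub-exponential Bernstein bound sharpens this to the $2\exp(-\Omega(sn_k))$ tail stated in the lemma. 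Plugging the inductive scaling of $\|f_{k-1}\|_2^2$ into the conditional mean yields the target $\Theta$ expression at layer $k$, and a union bound over $l \le k$ together with the base-case data concentration assembles the stated failure probability $1 - \sum_{l=1}^k 2\exp(-\Omega(sn_l)) - 2\exp(-\Omega(\sqrt{n_0}))$. The second (expectation) statement follows by the identical induction with the data marginal integrated out at the end, which removes the $2\exp(-\Omega(\sqrt{n_0}))$ term.

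The main obstacle I anticipate is matching the precise asymptotic form of the claimed $\Theta$ expression, in particular the product $\prod_{l=1}^{k-1}\sqrt{\beta_l n_l}$ and the prefactor $s\sqrt{n_0}$. Because $\cos^2$ is bounded by $1$, the trivial estimate $\|f_k\|_2^2 \le n_k$ is always available, so the advertised formula can only be a tight $\Theta$ in the regime where $s^2\beta_k^2\|f_{k-1}\|_2^2$ is of constant order; in that regime a Taylor expansion of $e^{-2s^2\beta_k^2\|f_{k-1}\|_2^2}$ about zero produces a leading correction to $n_k/2$ of order $s^2\beta_k^2\|f_{k-1}\|_2^2\cdot n_k$, and propagating this correction through the induction is what generates the multiplicative product structure. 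Keeping the hidden constants uniform across layers, so that the union bound remains non-trivial and the $\Omega(sn_l)$ exponents are consistent, is the main bookkeeping difficulty.
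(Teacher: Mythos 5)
Your overall plan---induction on $k$, conditioning on $(W_l)_{l<k}$, the identity $\cos^2(sw) = \tfrac12 + \tfrac12\cos(2sw)$, the Gaussian characteristic function, and then Bernstein/Hoeffding for the sum over the $n_k$ coordinates with a union bound over layers---is exactly the structure of the paper's proof. But your careful intermediate computation exposes a genuine discrepancy with the stated lemma, and the obstacle you flag at the end is not a bookkeeping nuisance: it is the crux.

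Your conditional moment $\mathbb{E}[\cos^2(s z_j)\mid f_{k-1}] = \tfrac12 + \tfrac12 e^{-2s^2\beta_k^2\|f_{k-1}\|_2^2}$ is the correct one for $z_j \sim \mathcal{N}(0,\beta_k^2\|f_{k-1}\|_2^2)$. Since it lies in $[\tfrac12,1]$ deterministically, summing over $j$ gives $\mathbb{E}[\|f_k\|_2^2 \mid f_{k-1}] \in [n_k/2,\, n_k]$, and hence $\|f_k(x)\|_2^2 = \Theta(n_k)$ after concentration---with no dependence at all on $n_0$, $n_1,\ldots,n_{k-1}$, or the $\beta_l$. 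That is incompatible with the claimed $\Theta\bigl(s\sqrt{n_0}\,\beta_k n_k\prod_{l<k}\sqrt{\beta_l n_l}\bigr)$ unless the extra prefactor is $\Theta(1)$, which the lemma does not assume. Your Taylor-expansion salvage does not rescue it either: expanding $e^{-2s^2\beta_k^2\|f_{k-1}\|_2^2}$ gives an additive correction to $n_k/2$ of size $\approx s^2\beta_k^2\|f_{k-1}\|_2^2\,n_k$, which neither dominates $n_k/2$ nor reproduces the claimed multiplicative product structure.

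What the paper's own proof does differently is to adopt the Gaussian-integral convention announced in the appendix preamble (normalize only by $\sqrt{\pi}$) and then evaluate the conditional mean as $\int\cos^2(sw)\,e^{-w^2/\sigma^2}\,dw$ \emph{without} the $1/\sigma$ density normalization. This yields $\mathbb{E}_{w_j}[f_{k,j}(x)^2] = \tfrac{\beta_k\|f_{k-1}\|_2}{2}\bigl(1 + e^{-\beta_k^2\|f_{k-1}\|_2^2 s^2}\bigr)$, i.e.\ your answer multiplied by a spurious factor of $\sigma = \beta_k\|f_{k-1}(x)\|_2$. It is precisely this factor, propagated through the induction, that manufactures the $\sqrt{n_0}\prod_{l<k}\sqrt{\beta_l n_l}$ product in the lemma. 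The preamble's claim that the normalization "can be absorbed into a constant" is valid only for the $\sqrt{\pi}$; the missing $1/\sigma$ is width-dependent and cannot be absorbed asymptotically. So the gap you anticipated is real: carried out with the actual probability measure, the induction gives $\|f_k\|_2^2 = \Theta(n_k)$, not the stated scaling, and the paper's derivation of the stronger scaling rests on the dropped $1/\sigma$. If you want to reproduce the lemma as written you would have to either adopt the same non-probabilistic integration convention or add a hypothesis forcing $s\sqrt{n_0}\,\beta_k\prod_{l<k}\sqrt{\beta_l n_l} = \Theta(1)$; neither is justified by the standard computation you performed.
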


\begin{proof}
The proof will be by induction. From the data assumptions, it is clear that the lemma is true for $k = 0$. Assume the lemma holds for $k-1$, we prove it for $k$.
The proof proceeds by conditioning on the event $(W_l)_{l=1}^{k-1}$ and obtaining
bounds over $W_k$. Then by the induction hypothesis and intersecting over the two events the result will follow.

We have that
$W_k \in R^{n_{k-1}\times n_{k}}$, so we can write $W_k = [w_1,\ldots ,w_{n_k}]$, where each $w_i \in \R^{n_{k-1}}$ and $w_i \sim 
\mathcal{N}(0, \beta_k^2I_{n_{k-1}})$. We then estimate
\begin{equation*}
	\vert\vert f_k(x)\vert\vert_2^2 = \sum_{i=1}^k
	\vert\vert f_{k,i}^2\vert\vert^2.
\end{equation*}
Taking the expectation, we have
\begin{align*}
	\mathbb{E}_{W_k}\vert\vert f_{k}\vert\vert_2^2 &= 
	\sum_{i=1}^{n_k}\mathbb{E}_{w_i}[f_{k,i}(x)^2]\text{, by independence} \\
	&= n_k \mathbb{E}_{w_i}[f_{k,i}(x)^2].
\end{align*}
By definition $f_{k,j}(x) = \phi(\langle w_j,f_{k-1}(x)\rangle)$. Note that the random variable $\langle w_j,f_{k-1}(x)\rangle$ is a univariate random variable distributed according to 
$\mathcal{N}(0, \beta_k^2\vert\vert f_{k-1}(x)\vert\vert_2^2)$. Therefore, the above expectation can be estimated by estimating the integral
$\int_{\R}cos^2(w)
e^{\frac{-w^2}{\beta_k^2\vert\vert f_{k-1}(x)\vert\vert_2^2}}dw$. Using the identity $cos^2(sw) = \frac{1}{2} + \frac{cos(2sw)}{2}$, we then have
\begin{equation}
\int_{\R}cos^2(w)e^{\frac{-w^2}{\beta_k^2\vert\vert f_{k-1}(x)\vert\vert_2^2}}dw
=
\int_{\R}\frac{1}{2}e^{\frac{-w^2}{\beta_k^2\vert\vert f_{k-1}(x)\vert\vert_2^2}}dw + 
\int_{\R}\frac{cos(2sw)}{2}
e^{\frac{-w^2}{\beta_k^2\vert\vert f_{k-1}(x)\vert\vert_2^2}}dw.
\end{equation}  
The first integral on the right can be estimated using the usual integral of a Gaussian to give
\begin{equation*}
	\int_{\R}\frac{1}{2}e^{\frac{-w^2}{\beta_k^2\vert\vert f_{k-1}(x)\vert\vert_2^2}}dw = 
	\frac{\beta_k\vert\vert f_{k-1}\vert\vert_2}{2}.
\end{equation*}
In order to compute the second integral we proceed as follows
\begin{align*}
	\frac{1}{2}\int_{\R}cos(2sw)
	e^{\frac{-w^2}{\beta_k^2\vert\vert f_{k-1}(x)\vert\vert_2^2}}dw + 
	\frac{i}{2}\int_{\R}sin(2sw)
	e^{\frac{-w^2}{\beta_k^2\vert\vert f_{k-1}(x)\vert\vert_2^2}}dw &=
	\frac{1}{2}\int_{\R}e^{2isw}
	e^{\frac{-w^2}{\beta_k^2\vert\vert f_{k-1}(x)\vert\vert_2^2}}dw \\
	&=
	\frac{1}{2}\int_{\R}
	e^{-\big{(}\frac{w^2}{\beta_k^2\vert\vert f_{k-1}(x)\vert\vert_2^2} - 
	2isw\big{)}}dw.
\end{align*}
By completing the square, we have
\begin{equation}
	\frac{w^2}{\beta_k^2\vert\vert f_{k-1}\vert\vert^2_2} - 2isw = 
	\bigg{(} 
	\frac{w}{\beta_k\vert\vert f_{k-1}\vert\vert_2} - 
	i\beta_ks\vert\vert f_{k-1}\vert\vert_2	
	\bigg{)}^2 + \beta_k^2\vert\vert f_{k-1}\vert\vert^2_2s^2.
\end{equation}
Using this we have
\begin{align*}
\frac{1}{2}\int_{\R}
	e^{-\big{(}\frac{w^2}{\beta_k^2\vert\vert f_{k-1}(x)\vert\vert_2^2} - 
	2isw\big{)}}dw &= 
	\frac{1}{2}\int_{\R}e^{\bigg{(} 
	\frac{w}{\beta_k\vert\vert f_{k-1}\vert\vert_2} - 
	i\beta_ks\vert\vert f_{k-1}\vert\vert_2	
	\bigg{)}^2}e^{-\beta_k^2\vert\vert f_{k-1}\vert\vert^2_2s^2}dw \\
	&= 
	\frac{e^{-\beta_k^2\vert\vert f_{k-1}\vert\vert^2_2s^2}}{2}
	\int_{\R}
	e^{\bigg{(} 
	\frac{w}{\beta_k\vert\vert f_{k-1}\vert\vert_2} - 
	i\beta_ks\vert\vert f_{k-1}\vert\vert_2	
	\bigg{)}^2}dw.
\end{align*}
Let $x = \frac{w}{\beta_k\vert\vert f_{k-1}\vert\vert_2} - 
i\beta_ks\vert\vert f_{k-1}\vert\vert_2$, so that $dx = 
\frac{dw}{\beta_k\vert\vert f_{k-1}\vert\vert_2}$. Using this substitution, we 
can evaluate the above integral.
\begin{align*}
	\frac{e^{-\beta_k^2\vert\vert f_{k-1}\vert\vert^2_2s^2}}{2}
	\int_{\R}
	e^{\bigg{(} 
	\frac{w}{\beta_k\vert\vert f_{k-1}\vert\vert_2} - 
	i\beta_ks\vert\vert f_{k-1}\vert\vert_2	
	\bigg{)}^2}dw &=
	\frac{e^{-\beta_k^2\vert\vert f_{k-1}\vert\vert^2_2s^2}}{2}\beta_k
	\vert\vert f_{k-1}\vert\vert_2\int_{\R}e^{-x^2}dx \\
	&= 
	\frac{e^{-\beta_k^2\vert\vert f_{k-1}\vert\vert^2_2s^2}}{2}\beta_k
	\vert\vert f_{k-1}\vert\vert_2
\end{align*}
where we remind the reader of our convention of normalising the integral of the Gaussian so the $\sqrt{\pi}$ will not show up. Thus we obtain
\begin{equation*}
	\mathbb{E}_{w_j}[f_{k,j}(x)^2] = \frac{\beta_k}{2}
	\vert\vert f_{k-1}(x)\vert\vert_2 +
	\frac{e^{-\beta_k^2\vert\vert f_{k-1}(x)\vert\vert^2_2s^2}}{2}\beta_k
	\vert\vert f_{k-1}(x)\vert\vert_2.
\end{equation*}
Using the above equality we obtain the expectation bound
\begin{equation}
	\frac{\beta_k\vert\vert f_{k-1}(x)\vert\vert_2}{2} \leq 
	\mathbb{E}_{w_j}[f_{k,j}(x)^2] \leq \beta_k\vert\vert f_{k-1}\vert\vert_2.
\end{equation}
Using the above expectation we would like to apply Bernstein's inequality, see thm. 2.8.1 of \cite{vershynin2018high}, to obtain a bound on $\vert\vert f_k\vert\vert_2^2$. In order to do this we need to compute the sub-Gaussian norm 
$\vert\vert f_{k,j}(x)^2\vert\vert_{\psi_1} = 
\vert\vert f_{k,j}(x)\vert\vert_{\psi_2}^2$. Since $\vert cos(sx)\vert \leq 1$, 
we have
\begin{equation}
\mathbb{E}_{w_i}\bigg{(} 
exp\big{(}
\frac{f_{k,j}(x)^2}{t^2}
\big{)}
\bigg{)} \leq 
\mathbb{E}_{w_j}\bigg{(} 
exp\big{(}\frac{1}{t^2} \big{)}
\bigg{)} = exp\big{(}\frac{1}{t^2} \big{)}\beta_k^{n_k}.
\end{equation}
By taking $t = \frac{1}{Log\bigg{(}\frac{1}{\beta_k^{n_{k-1}}} \bigg{)}}$ we get
that 
\begin{equation*}
	\mathbb{E}_{w_i}\bigg{(} 
exp\big{(}
\frac{f_{k,j}(x)^2}{t^2}
\big{)}
\bigg{)} \leq 
\mathbb{E}_{w_j}\bigg{(} 
exp\big{(}\frac{1}{t^2} \big{)}
\bigg{)} \leq 1,
\end{equation*}
using the fact that $\beta_k \leq 1$. In particular, we get that 
$\vert\vert f_{k,j}(x)\vert\vert_{\psi_2}^2 \leq \mathcal{O}(1)$.

Applying Bernstein's inequality,  see thm. 2.8.1 of \cite{vershynin2018high}, to
$\sum_{i=1}^{n_k}\big{(}f_{k,i}(x)^2 - \mathbb{E}_{w_i}[f_{k,i}(x)^2] \big{)}$
we obtain
\begin{equation}
\vert 
\sum_{i=1}^{n_k}\big{(}f_{k,i}(x)^2 - \mathbb{E}_{w_i}[f_{k,i}(x)^2] \big{)} 
\vert \leq 
\frac{1}{2}\mathbb{E}_{W_k}\vert\vert f_k(x)\vert\vert^2_2
\end{equation}
w.p $\geq 1 - 2exp(-c\frac{\mathbb{E}_{W_k}\vert\vert f_k(x)\vert\vert^2_2}
{2})$. Thus we find that
\begin{equation}
\frac{1}{2}\mathbb{E}_{W_k}\vert\vert f_k(x)\vert\vert^2_2 \leq 
\vert\vert f_k(x)\vert\vert^2_2 \leq \frac{3}{2}\mathbb{E}_{W_k}
\vert\vert f_k(x)\vert\vert_2^2
\end{equation}
w.p. $\geq 1 - 2exp(-2s\Omega(n_k))$. Taking the intersection of the induction over $(W_l)_{l=1}^{k-1}$ and the even over $W_k$ proves the first part of the lemma.

The proof for $\mathbb{E}_x\vert\vert f_k(x)\vert\vert_2^2$ follows a similar argument using Jensen's inequality
$\vert\vert\mathbb{E}_x[f_{k,i}(x)^2\vert\vert_{\psi_1} \leq 
\mathbb{E}_x\vert\vert f_{k,i}(x)^2\vert\vert_{\psi_1} = \mathcal{O}(1)$.
\end{proof}

\begin{lemma}\label{c.2}
Fix $0 \leq k \leq L-1$ and 
assume $x \sim \mathcal{P}$. Then 
\begin{equation*}
	\vert\vert \mathbb{E}_x[f_k(x)]\vert\vert_2^2 = 
	\Theta(n_0\prod_{l=1}^l\beta_l^2n_l)
\end{equation*}
w.p. $\geq 1 - \sum_{l=1}^k2exp(-\Omega(n_l))$ over $(W_l)_{l=1}^k$.
\end{lemma}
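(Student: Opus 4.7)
The plan is to proceed by induction on $k$, closely mirroring the structure of the proof of Lemma~\ref{lemmac1_cos}. For the base case $k=0$ the empty product equals $1$, so the claim reduces to $\vert\vert\mathbb{E}_x[x]\vert\vert_2^2 = \Theta(n_0)$, which I will derive from data assumptions A1 and A2 combined with Jensen's inequality. For the inductive step I condition on the good event of the previous step over $(W_l)_{l=1}^{k-1}$ and exploit the fresh randomness in $W_k$.

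Writing $W_k = [w_1, \ldots, w_{n_k}]$ with $w_j \sim_{i.i.d} \mathcal{N}(0, \beta_k^2 I_{n_{k-1}})$, I expand
\[
\vert\vert\mathbb{E}_x[f_k(x)]\vert\vert_2^2 = \sum_{j=1}^{n_k} Y_j^2, \qquad Y_j := \mathbb{E}_x\left[\cos(s\langle w_j, f_{k-1}(x)\rangle)\right],
\]
and compute the mean of each $Y_j^2$ by applying Fubini, the product-to-sum identity $\cos a\cos b = \tfrac{1}{2}(\cos(a-b)+\cos(a+b))$, and the Gaussian characteristic formula $\mathbb{E}_w[\cos(s\langle w,\xi\rangle)] = \exp(-s^2\beta_k^2\vert\vert\xi\vert\vert_2^2/2)$ already used in the proof of Lemma~\ref{lemmac1_cos}. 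This yields the closed form
\[
\mathbb{E}_{w_j}[Y_j^2] = \tfrac{1}{2}\mathbb{E}_{x,x'}\left[e^{-\tfrac{s^2\beta_k^2}{2}\vert\vert f_{k-1}(x)-f_{k-1}(x')\vert\vert_2^2} + e^{-\tfrac{s^2\beta_k^2}{2}\vert\vert f_{k-1}(x)+f_{k-1}(x')\vert\vert_2^2}\right],
\]
where $x'$ is an i.i.d.\ copy of $x$. The required second moments $\mathbb{E}_{x,x'}\vert\vert f_{k-1}(x)\pm f_{k-1}(x')\vert\vert_2^2 = 2\mathbb{E}_x\vert\vert f_{k-1}\vert\vert_2^2 \pm 2\vert\vert\mathbb{E}_x[f_{k-1}]\vert\vert_2^2$ are controlled respectively by Lemma~\ref{lemmac1_cos} and by the inductive hypothesis, which together identify $n_k\mathbb{E}_{w_j}[Y_j^2]$ with the target order $\Theta(n_0\prod_{l=1}^k\beta_l^2 n_l)$. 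A high-probability version is then obtained by applying Bernstein's inequality (Theorem 2.8.1 of \cite{vershynin2018high}) to $\sum_j (Y_j^2 - \mathbb{E}[Y_j^2])$: each $Y_j^2 \in [0,1]$, so $\vert\vert Y_j^2\vert\vert_{\psi_1} = \mathcal{O}(1)$, and the deviation threshold $\tfrac{1}{2}\mathbb{E}_{W_k}[\vert\vert\mathbb{E}_x[f_k]\vert\vert_2^2]$ sandwiches the quantity between half and three-halves of its mean with probability at least $1 - 2\exp(-\Omega(n_k))$. Intersecting with the inductive good event then yields the stated bound $1 - \sum_{l=1}^k 2\exp(-\Omega(n_l))$.

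The main technical hurdle I anticipate is propagating the moment bounds through the nonlinear exponential $e^{-c(\cdot)}$. Because $\vert\vert f_{k-1}(x)\pm f_{k-1}(x')\vert\vert_2^2$ is a random variable in $(x, x')$ rather than a single fixed value, a two-sided $\Theta$ conclusion cannot be obtained from Jensen's inequality alone. I plan to invoke the Lipschitz concentration A3 combined with the Lipschitz bound A4, so that $f_{k-1}(x)$ concentrates sharply around $\mathbb{E}_x[f_{k-1}]$; the exponentials can then be bounded pointwise on the concentration event and the complement absorbed using the boundedness of the integrand. Tracking constants carefully through this step so that the resulting upper and lower bounds match and coincide with $\Theta(n_0\prod_{l=1}^k\beta_l^2 n_l)$ is the delicate part of the argument.
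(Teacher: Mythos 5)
Your proposal follows the same high-level scaffolding as the paper's proof (induction on $k$, conditioning on the good event over $(W_l)_{l=1}^{k-1}$, then Bernstein over the fresh randomness in $W_k$), but the central computation is genuinely different. The paper bounds the lower direction by applying Jensen's inequality in the form $\mathbb{E}_{w_i}\big(\mathbb{E}_x[f_{k,i}(x)]\big)^2 \geq \big(\mathbb{E}_x\,\mathbb{E}_{w_i}[f_{k,i}(x)]\big)^2$ and then evaluates $\mathbb{E}_{w_i}[\cos(s\langle w_i, f_{k-1}(x)\rangle)]$, while the upper direction is dispatched in one line by the outer Jensen inequality $\vert\vert\mathbb{E}_x[f_k]\vert\vert_2^2 \leq \mathbb{E}_x\vert\vert f_k\vert\vert_2^2$ together with Lemma~\ref{lemmac1_cos}. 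You instead compute $\mathbb{E}_{w_j}\big[(\mathbb{E}_x f_{k,j})^2\big]$ exactly via the product-to-sum identity and the Gaussian characteristic formula, producing the closed form
\[
\mathbb{E}_{w_j}[Y_j^2] = \tfrac{1}{2}\,\mathbb{E}_{x,x'}\!\left[e^{-\frac{s^2\beta_k^2}{2}\vert\vert f_{k-1}(x)-f_{k-1}(x')\vert\vert_2^2} + e^{-\frac{s^2\beta_k^2}{2}\vert\vert f_{k-1}(x)+f_{k-1}(x')\vert\vert_2^2}\right].
\]
This is arguably the more transparent route, and it avoids the paper's somewhat opaque step asserting $(\mathbb{E}_x\,\mathbb{E}_{w_i}[f_{k,i}])^2 \gtrsim \beta_k^2(\mathbb{E}_x\vert\vert f_{k-1}\vert\vert_2)^2$ without detail. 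Your moment identity $\mathbb{E}_{x,x'}\vert\vert f_{k-1}(x)\pm f_{k-1}(x')\vert\vert_2^2 = 2\mathbb{E}_x\vert\vert f_{k-1}\vert\vert_2^2 \pm 2\vert\vert\mathbb{E}_x[f_{k-1}]\vert\vert_2^2$ is also correct and neatly ties the exponent's scale to the inductive hypothesis and Lemma~\ref{lemmac1_cos}.

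The one place you should be careful is the $\mathcal{O}$ direction. Your closed form only bounds $\mathbb{E}_{w_j}[Y_j^2]$ above by $1$, so to match the claimed $\Theta$-order you need to show the exponentials are bounded away from $1$, which means lower-bounding $\beta_k^2\vert\vert f_{k-1}(x)\pm f_{k-1}(x')\vert\vert_2^2$ on a high-probability event (not just its mean). You flag this as the delicate step, but it is worth noting the paper simply sidesteps it by deriving the upper bound from the completely separate chain $\vert\vert\mathbb{E}_x[f_k]\vert\vert_2^2 \leq \mathbb{E}_x\vert\vert f_k\vert\vert_2^2$ and then invoking Lemma~\ref{lemmac1_cos}. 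You could import exactly that one-line upper bound into your argument and reserve your exact formula for the lower direction only, where Jensen applied to the convex exponential (on the $\vert\vert f_{k-1}(x)+f_{k-1}(x')\vert\vert^2$ term, say) immediately gives the needed $\Omega$; that hybrid would close the gap cleanly without having to fight the nonlinearity for a matching upper bound.
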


\begin{proof}
By Jensen's inequality 
$\vert\vert \mathbb{E}_x[f_k(x)]\vert\vert^2_2 \leq 
\mathbb{E}_x\vert\vert f_k(x)\vert\vert_2^2$. Thus the upper bound follows from
lemma \ref{lemmac1_cos}. 

The proof of the lower bound follows by induction. The $k = 0$ case following from the data assumption. Assume 
\begin{equation*}
	\vert\vert\mathbb{E}_x[f_k(x)]\vert\vert_2^2 = 
	\Omega(sn_0\prod_{l=1}^{k-1}\beta_k)
\end{equation*}
w.p. $\geq 1 - \sum_{l=1}^{k-1}exp(-\Omega(n_l))$ over $(W_l)_{l=1}^{k-1}$. We condition on the intersection of this event and the event of lemma \ref{lemmac1_cos} for $(W_l)_{l=1}^{k-1}$.

Write $W_k = [w_1,\ldots,w_{n_k}]$ with 
$w_j \sim \mathcal{N}(0, \beta_k^2I_{n_k-1})$ for $1\leq j \leq n_k$. Then
\begin{align*}
	\vert\vert \big{(} 
	\mathbb{E}_x[f_{k,i}(x)]	
	\big{)}^2\vert\vert_{\psi_1} &= 
	\vert\vert
	\mathbb{E}_x[f_{k,i}(x)]	
	\vert\vert^2_{\psi_2} \\
	&\leq 
	\mathbb{E}_x\vert\vert f_{k,i}(x)\vert\vert_{\psi_2}^2 \\
	&\leq
	C\sqrt{d}	
\end{align*}
for some $C > 0$.

Moreover, 
\begin{align*}
	\mathbb{E}_{W_k}
	\vert\vert
	\mathbb{E}_x[f_{k,i}(x)]	
	\vert\vert^2_2 &= \sum_{i=1}^{n_k}
	\mathbb{E}_{w_i}(\mathbb{E}_x[f_{k,i}(x)])^2 \\
	&\geq 
	\sum_{i=1}^{n_k}(\mathbb{E}_x\mathbb{E}_{w_i}[f_{k,i}(x)])^2 \\
	&\geq 
	\frac{\beta_k^2n_k}{4}(\mathbb{E}_x\vert\vert f_{k-1}(x)\vert\vert_2)^2 \\
	&=
	\Omega(sn_0\prod_{l=1}^{k}\beta_l^2n_l)
\end{align*}
where the second inequality is computed using the same technique as in 
lemma \ref{lemmac1_cos}.

Applying Bernstein's inequality, see thm. 2.8.1 of \cite{vershynin2018high}, we get
\begin{equation*}
	\vert\vert \mathbb{E}_x[f_k(x)]\vert\vert_2^2 \geq \frac{1}{2}
	\mathbb{E}_{W_k}\vert\vert \mathbb{E}_x[f_k(x)]\vert\vert_2^2 
	= \Omega(sn_0\prod_{l=1}^{n_k}\beta_l^2n_l)
\end{equation*}
w.p. $\geq 1 - 2exp(-\Omega(n_k))$ over $(W_k)$. Taking the intersection of all the events then finishes the proof.
\end{proof}

\begin{lemma}\label{c3}
Fix $0\leq k \leq L-1$ and assume  $\prod_{l=1}^{k-1}Log(n_l) = o\left(\min_{l \in [0,k]}n_l\right)$.
Then for any $i \in [N]$, we have
\begin{equation*}
\vert\vert f_k(x_i) - \mathbb{E}_x[f_k(x)]\vert\vert_2^2 = 
\Theta\bigg{(} 
\sqrt{n_0}\beta_kn_k\prod_{l=1}^{k-1}\sqrt{\beta_l}\sqrt{n_l}
\bigg{)}
\end{equation*}
w.p. $\geq 
1 - Nexp\bigg{(}-\Omega \bigg{(} 
\frac{\min_{l\in [0,k]}n_l}{\prod_{l=1}^{k-1}log(n_l)}
\bigg{)}
\bigg{)}
- \sum_{l=1}^kexp(-\Omega(n_l))$.
\end{lemma}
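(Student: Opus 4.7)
The strategy is to apply the Lipschitz concentration assumption A3 to the scalar function
\begin{equation*}
g(x) := \vert\vert f_k(x) - \mathbb{E}_{x'\sim\mathcal{P}}f_k(x')\vert\vert_2,
\end{equation*}
using the Lipschitz-constant bound of A4, and then identify $\mathbb{E}_x g(x)^2$ with the quantity $\mathbb{E}_x\vert\vert f_k(x)\vert\vert_2^2 - \vert\vert\mathbb{E}_x f_k(x)\vert\vert_2^2$, which is already controlled by Lemmas \ref{lemmac1_cos} and \ref{c.2}.

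First, I would condition on the two high-probability events supplied by Lemmas \ref{lemmac1_cos} and \ref{c.2}, costing $\sum_{l=1}^k 2\exp(-\Omega(n_l))$ in probability over $(W_l)_{l=1}^k$. On that event, the identity
\begin{equation*}
\mathbb{E}_x\vert\vert f_k(x)-\mathbb{E}_x f_k\vert\vert_2^2 = \mathbb{E}_x\vert\vert f_k(x)\vert\vert_2^2 - \vert\vert \mathbb{E}_x f_k(x)\vert\vert_2^2
\end{equation*}
pins $\mathbb{E}_x g(x)^2$ to the target scale $\Theta(\sqrt{n_0}\beta_k n_k\prod_{l=1}^{k-1}\sqrt{\beta_l n_l})$; the hypothesis $\prod\log(n_l) = o(\min_l n_l)$ combined with $\beta_l\le 1$ is what forces the first term on the right to strictly dominate the second, so the subtraction is benign.

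Next, I would observe that $g$ is $\vert\vert f_k\vert\vert_{Lip}$-Lipschitz in $x$ by the reverse triangle inequality applied to a fixed center $c=\mathbb{E}_x f_k(x)$, so A3 yields
\begin{equation*}
\mathbf{P}\bigl(\vert g(x) - \mathbb{E} g\vert > t\bigr) \le 2\exp\bigl(-ct^2/\vert\vert f_k\vert\vert_{Lip}^2\bigr).
\end{equation*}
The same sub-Gaussian tail implies $\mathrm{Var}(g) = O(\vert\vert f_k\vert\vert_{Lip}^2)$, and plugging in A4 this is strictly dominated by $\mathbb{E} g^2$ by a factor of order $(\prod\log n_l)/\min_l n_l$; hence $(\mathbb{E} g)^2 = \Theta(\mathbb{E} g^2)$. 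Using the algebraic identity $g(x_i)^2 - \mathbb{E} g^2 = (g(x_i)-\mathbb{E} g)(g(x_i)+\mathbb{E} g) - \mathrm{Var}(g)$ and choosing $t = o(\mathbb{E} g)$ with $t^2/\vert\vert f_k\vert\vert_{Lip}^2 = \Omega(\min_l n_l/\prod\log n_l)$, I would conclude $g(x_i)^2 = \Theta(\mathbb{E} g^2)$ on the concentration event. A union bound over $i\in[N]$ then pays the extra factor $N$ in front of the exponential, producing the stated failure probability.

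The main obstacle will be the careful arithmetic of matching A4 against the combined bound from Lemmas \ref{lemmac1_cos}--\ref{c.2}: one must verify that the quotient $\vert\vert f_k\vert\vert_{Lip}^2/\mathbb{E} g^2$ is of order exactly $(\prod\log n_l)/\min_l n_l$, which is precisely what produces the exponent in the high-probability bound and is the reason the technical hypothesis $\prod\log(n_l) = o(\min_l n_l)$ is needed.
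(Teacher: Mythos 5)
Your overall architecture — concentrate the scalar $g(x)=\vert\vert f_k(x)-\mathbb{E}_{x'}f_k(x')\vert\vert_2$ via A3 and A4, compare $(\mathbb{E}g)^2$ with $\mathbb{E}g^2$ by showing $\mathrm{Var}(g)=O(\vert\vert f_k\vert\vert_{Lip}^2)=o(\mathbb{E}g^2)$, then union-bound over the $N$ samples — matches the paper's proof of this lemma almost step for step. The gap is in how you produce the estimate for $\mathbb{E}_x g(x)^2$ itself.

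You propose to write $\mathbb{E}_x g^2 = \mathbb{E}_x\vert\vert f_k\vert\vert_2^2 - \vert\vert\mathbb{E}_x f_k\vert\vert_2^2$ and then read off a $\Theta$-bound by subtracting Lemma~\ref{lemmac1_cos} from Lemma~\ref{c.2}, claiming that $\prod_l\log(n_l)=o(\min_l n_l)$ together with $\beta_l\le 1$ forces the first term to dominate so that ``the subtraction is benign.'' This does not follow. Lemma~\ref{lemmac1_cos} gives $\mathbb{E}_x\vert\vert f_k\vert\vert_2^2 = \Theta\bigl(s\sqrt{n_0}\,\beta_k n_k\prod_{l<k}\sqrt{\beta_l n_l}\bigr)$ and Lemma~\ref{c.2} gives $\vert\vert\mathbb{E}_x f_k\vert\vert_2^2 = \Theta\bigl(n_0\prod_{l\le k}\beta_l^2 n_l\bigr)$; their ratio is $\sqrt{n_0}\,\beta_k\prod_{l<k}\beta_l^{3/2}\sqrt{n_l}/s$, which is in general $\Theta(1)$ or larger under the paper's assumptions (e.g.\ at $k=1$ with He initialization $\beta_1\asymp n_0^{-1/2}$ it is $\asymp 1/s$). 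The hypothesis $\prod_l\log(n_l)=o(\min_l n_l)$ plays no role in comparing these two quantities — in the paper it is only used to make the Lipschitz-constant quotient $\vert\vert f_k\vert\vert_{Lip}^2/\mathbb{E}g^2$ vanish, exactly as you use it in the later concentration step. So a difference of two matched $\Theta$-quantities cannot be asserted to be of the same order; cancellation is not ruled out.

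What the paper does instead is prove a separate Lemma~\ref{c4} that estimates $\mathbb{E}_x\vert\vert f_k(x)-\mathbb{E}_x f_k(x)\vert\vert_2^2$ directly: it conditions on $(W_l)_{l<k}$, takes the expectation over $W_k$ of both terms in the same display, and uses the explicit Gaussian integral of $\phi\cdot\phi$ together with $\vert\phi\vert\le 1$ to show the cross-term $\mathbb{E}_{W_k}\vert\vert\mathbb{E}_x f_k\vert\vert_2^2$ is lower order relative to $\mathbb{E}_{W_k}\mathbb{E}_x\vert\vert f_k\vert\vert_2^2$, then applies Bernstein over $W_k$. That internal cancellation argument is what your proposal is missing; without a substitute for Lemma~\ref{c4} the subtraction step is unjustified. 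The rest of your proof — using A4 inside A3, bounding the variance via the tail integral, choosing $t$ so that $t^2/\vert\vert f_k\vert\vert_{Lip}^2 = \Omega(\min_l n_l/\prod_l\log n_l)$, and union-bounding — is exactly the paper's argument and is fine.
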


\begin{proof}
Let $X : \R^{n_0} \rightarrow \R$ denote the random variable defined 
by $X(x_i) = \vert\vert f_k(x_i) - \mathbb{E}_x[f_k(x)]\vert\vert_2$. 
By assumption A4, we have
\begin{equation*}
	\vert\vert X\vert\vert_{Lip}^2 = \mathcal{O}\bigg{(} 
	\frac{s^k\beta_k n_k\prod_{l=1}^{k-1}\sqrt{\beta_l}
 \sqrt{n_l}\prod_{l=1}^{k-1}Log(n_l)}
	{\min_{l \in [0,k]}n_l}
	\bigg{)}
\end{equation*}
w.p. $\geq 1 - \sum_{l=1}^kexp(-\Omega(n_l))$. 

We use the notation $\mathbb{E}[X] = \mathbb{E}_{x_i}[X(x_i)] = 
\int_{\R^{n_0}}X(x_i)d\mathcal{P}(x_i)$. We then have
\begin{align*}
	\mathbb{E}[X]^2 &= \mathbb{E}[X^2] - 
	\mathbb{E}[\vert X - \mathbb{E}X\vert^2]  \\
	&\geq 
	\mathbb{E}[X^2] - 
	\int_{0}^{\infty}\mathbb{P}(|X - \mathbb{E}X|>\sqrt{t})dt \\
	&\geq 
	\mathbb{E}[X^2] - \int_{0}^{\infty} 
	2exp\bigg{(}\frac{-ct}{\vert\vert X\vert\vert_{Lip}^2} \bigg{)}dt \\
	&=  \mathbb{E}[X^2] - \frac{2}{c}\vert\vert X\vert\vert_{Lip}^2.
\end{align*}

By lemma \ref{c4}, we have w.p. $\geq 1 - \sum_{l=1}^kexp(-\Omega(n_l))$ 
over $(W_l)_{l=1}^k$ that
\begin{equation*}
	\mathbb{E}[X^2] = 
	\Theta\bigg{(}
	\sqrt{n_0}\beta_kn_k\prod_{l=1}^{k-1}\sqrt{\beta_l}\sqrt{n_l}\bigg{)}
\end{equation*}
which implies
\begin{equation*}
\mathbb{E}[X] = 
\Omega\bigg{(}
	\sqrt{\sqrt{n_0}\beta_kn_k\prod_{l=1}^{k-1}\sqrt{\beta_l}\sqrt{n_l}}
	\bigg{)}.
\end{equation*}
Moreover, by Jensen's inequality $\mathbb{E}[X] \leq 
\sqrt{\mathbb{E}[X^2]} = \mathcal{O}\big{(}
\sqrt{\sqrt{n_0}\beta_kn_k\prod_{l=1}^{k-1}\sqrt{\beta_l}\sqrt{n_l}}
\big{)}$. 

Putting the above two asymptotic bounds together we obtain
\begin{equation*}
	\mathbb{E}[X] = \Theta\bigg{(}
	\sqrt{\sqrt{n_0}\beta_kn_k\prod_{l=1}^{k-1}\sqrt{\beta_l}\sqrt{n_l}}
	\bigg{)}
\end{equation*}
w.p. $\geq 1 - \sum_{l=1}^kexp(-\Omega(n_l))$ over $(W_l)_{l=1}^k$.

We condition on the above event and obtain bounds over each sample.
Using Lipschitz concentration, see assumption A3, we have that
$\frac{1}{2}\mathbb{E}[X] \leq X \leq \frac{3}{2}\mathbb{E}[X]$. Therefore, 
\begin{equation*}
X = \Theta\bigg{(}
\sqrt{\sqrt{n_0}\beta_kn_k\prod_{l=1}^{k-1}\sqrt{\beta_l}\sqrt{n_l}}
\bigg{)}
\end{equation*}
w.p. $\geq 1 - exp\bigg{(}-\Omega \bigg{(} 
\frac{\min_{l\in [0,k]}n_l}{\prod_{l=1}^{k-1}log(n_l)}
\bigg{)}
\bigg{)}$. Taking the union bounds over the $N$ samples and intersecting them with the above event over $(W_l)_{l=1}^k$ gives the lemma.
\end{proof}

\begin{lemma}\label{c4}
Fix $0\leq k \leq L-1$. Then 
\begin{equation*}
	\mathbb{E}_x\vert\vert f_k(x) - \mathbb{E}_x[f_k(x)]\vert\vert_2^2 = 
	\Theta\bigg{(}
	\sqrt{n_0}\beta_kn_k\prod_{l=1}^{k-1}\beta_ln_l
	\bigg{)}
\end{equation*}
w.p. $\geq 1 - \sum_{l=1}^{k}exp(-\Omega(n_l))$ over $(W_l)_{l=1}^k$.
\end{lemma}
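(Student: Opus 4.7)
The plan is to reduce Lemma \ref{c4} to the bounds already established in Lemmas \ref{lemmac1_cos} and \ref{c.2} via the standard variance identity
\begin{equation*}
\mathbb{E}_x\|f_k(x) - \mathbb{E}_x f_k(x)\|_2^2 \;=\; \mathbb{E}_x\|f_k(x)\|_2^2 \;-\; \|\mathbb{E}_x f_k(x)\|_2^2.
\end{equation*}
The upper bound then follows for free from the matching upper bound on $\mathbb{E}_x\|f_k(x)\|_2^2$ in Lemma \ref{lemmac1_cos}: variance is a fortiori no larger than the second moment, so the target scaling on the right-hand side is inherited immediately, on the same high-probability event over $(W_l)_{l=1}^k$.

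The lower bound is the substantive part. I would verify that the centering term $\|\mathbb{E}_x f_k(x)\|_2^2$, controlled by Lemma \ref{c.2} as $\Theta\!\bigl(n_0\prod_{l=1}^k \beta_l^2 n_l\bigr)$, is of strictly smaller order than the lower estimate $\Omega\!\bigl(\sqrt{n_0}\,\beta_k n_k\prod_{l=1}^{k-1}\sqrt{\beta_l n_l}\bigr)$ for the second moment from Lemma \ref{lemmac1_cos}. Forming the ratio, the centering term is dominated by a factor of order $\sqrt{n_0}\,\beta_k\prod_{l=1}^{k-1}\beta_l^{3/2}\sqrt{n_l}$. Under the standing assumption $\beta_l \leq 1$ together with the usual $\beta_l = O(1/\sqrt{n_{l-1}})$ initialization scaling in the asymptotic regime (each $n_l$ sufficiently large), this factor is $o(1)$, so eventually $\|\mathbb{E}_x f_k(x)\|_2^2 \leq \tfrac{1}{2}\,\mathbb{E}_x\|f_k(x)\|_2^2$. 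Subtracting then yields
\begin{equation*}
\mathbb{E}_x\|f_k(x) - \mathbb{E}_x f_k(x)\|_2^2 \;\geq\; \tfrac{1}{2}\,\mathbb{E}_x\|f_k(x)\|_2^2,
\end{equation*}
which by Lemma \ref{lemmac1_cos} is of the claimed order. The probability bound $1 - \sum_{l=1}^k \exp(-\Omega(n_l))$ then follows by intersecting the high-probability events of Lemmas \ref{lemmac1_cos} and \ref{c.2} and absorbing constants into the $\Omega$-notation.

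The main obstacle is this sharp asymptotic comparison of the two terms in the variance identity: it is a purely deterministic calculation once the high-probability events are in force, but it hinges on carefully tracking the exponents in $n_l$ and $\beta_l$ so that $\|\mathbb{E}_x f_k(x)\|_2^2$ really is of strictly smaller order. I note a minor discrepancy between the exponents quoted in Lemma \ref{c3} (where $\sqrt{\beta_l n_l}$ appears inside the product) and those in the present statement of Lemma \ref{c4} (where the corresponding factor is $\beta_l n_l$), so as part of this step one must revisit the factors coming out of Lemma \ref{lemmac1_cos} and decide which of the two forms is actually produced by the variance identity above; aligning the two quoted expressions is the one genuinely delicate point of the argument.
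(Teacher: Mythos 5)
Your starting point — the deterministic variance identity
\begin{equation*}
\mathbb{E}_x\|f_k(x)-\mathbb{E}_x f_k(x)\|_2^2 = \mathbb{E}_x\|f_k(x)\|_2^2 - \|\mathbb{E}_x f_k(x)\|_2^2
\end{equation*}
is the same one the paper uses (after taking $\mathbb{E}_{W_k}$ it appears verbatim there), and your observation about the exponent typo in the statement of Lemma~\ref{c4} versus Lemma~\ref{c3} is correct: the paper's own derivation produces $\prod_{l=1}^{k-1}\sqrt{\beta_l n_l}$, not $\prod_{l=1}^{k-1}\beta_l n_l$. The upper bound ``variance $\leq$ second moment'' is also fine.

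The lower bound, however, has a genuine gap. You reduce it to the assertion that $\|\mathbb{E}_x f_k(x)\|_2^2 = o\bigl(\mathbb{E}_x\|f_k(x)\|_2^2\bigr)$, which you justify by taking the ratio of the two quantities from Lemmas~\ref{c.2} and~\ref{lemmac1_cos}, obtaining $\sqrt{n_0}\,\beta_k\prod_{l=1}^{k-1}\beta_l^{3/2}\sqrt{n_l}$, and declaring this $o(1)$ ``under the usual $\beta_l=O(1/\sqrt{n_{l-1}})$ scaling.'' But the paper never assumes that scaling; its only variance assumption is $\beta_l\leq 1$, under which the ratio can diverge (take all $\beta_l=1$). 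Even granting the He-style scaling $\beta_l\sim n_{l-1}^{-1/2}$, the ratio for $k=1$ is $\sqrt{n_0}\,\beta_1 = \Theta(1)$, not $o(1)$, so the two terms in the variance identity are of the \emph{same} order and can cancel — your subtraction then produces no usable lower bound. Without a separate argument that the cancellation is benign, the step ``subtracting then yields $\geq\frac12\mathbb{E}_x\|f_k\|_2^2$'' does not go through.

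The paper sidesteps exactly this issue by not invoking Lemma~\ref{c.2} at all for the subtracted term. Instead it proves Lemma~\ref{c4} by induction: condition on $(W_l)_{l=1}^{k-1}$, compute the conditional $W_k$-expectation of the variance, and write $\mathbb{E}_{W_k}\|\mathbb{E}_x f_k\|_2^2 = \mathbb{E}_x\mathbb{E}_y\sum_{i=1}^{n_k}\mathbb{E}_{w_i}\phi(\langle w_i,f_{k-1}(x)\rangle)\phi(\langle w_i,f_{k-1}(y)\rangle)$, which by $|\phi|\leq 1$ (and the Gaussian cosine integral) is only $\mathcal{O}(n_k)$ — a much smaller and $n_0$-free quantity than the conditional second moment $\frac{\beta_k n_k}{2}\mathbb{E}_x\|f_{k-1}(x)\|_2$. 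The dominance then holds for all $k$, including $k=1$, and under $\beta_l\leq 1$ alone. Bernstein's inequality over $W_k$ then yields concentration, which your sketch also omits. So the fix is to replace your ratio comparison with this direct conditional estimate of the cross-correlation term, followed by a concentration step.
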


\begin{proof}
The proof is by induction. Note that the $k = 0$ case is given by the concentration inequality assumption of the data. 

Assume the lemma is true for $k-1$. We condition on this event over 
$(W_l)_{l=1}^{k-1}$ and obtain bounds over $W_k$. Then taking the intersection of the two events we will give a proof of the lemma.

We recall that we write $W_k = [w_1,\ldots, w_{n_k}]$ where 
$w_i \sim \mathcal{N}(0, \beta_k^2I_{n_{k-1}})$. By expanding the squared norm we have
\begin{equation*}
\mathbb{E}_x\vert\vert f_k(x) - \mathbb{E}_x[f_k(x)]\vert\vert_2^2 = 
\sum_{j=1}^{n_k}\mathbb{E}_x(f_{k,j}(x) - \mathbb{E}_x[f_{k,j}(x)])^2.
\end{equation*}
We now take the expectation over $W_k$ to obtain
\begin{equation*}
	\mathbb{E}_{W_k}\mathbb{E}_x\vert\vert f_k(x) - 
	\mathbb{E}_x[f_k(x)]\vert\vert^2_2 =
	\mathbb{E}_{W_k}\mathbb{E}_x\vert\vert f_k(x)\vert\vert_2^2 - 
	\mathbb{E}_{W_k}\vert\vert \mathbb{E}_xf_k(x)\vert\vert_2^2.
\end{equation*}
From the proof of lemma \ref{lemmac1_cos}, we know that
\begin{equation*}
\mathbb{E}_{W_k}\vert\vert f_k(x)\vert\vert_2^2 = 
\frac{\beta_kn_k}{2}\vert\vert f_{k-1}(x)\vert\vert_2.
\end{equation*}
Therefore, we can estimate
\begin{align*}
&\mathbb{E}_{W_k}\mathbb{E}_x\vert\vert f_k(x)\vert\vert_2^2 - 
	\mathbb{E}_{W_k}\vert\vert \mathbb{E}_xf_k(x)\vert\vert_2^2 \\
	&\geq
	\frac{\beta_kn_k}{2}\mathbb{E}_x\vert\vert f_{k-1}(x)\vert\vert_2 - 
	\mathbb{E}_x\mathbb{E}_y
	\sum_{i=1}^{n_k}\mathbb{E}_{w_i}
	\phi(\langle w_i, f_{k-1}(x)\rangle)\phi(\langle w_i, f_{k-1}(y)\rangle) \\
	&=
	\frac{\beta_kn_k}{2}\mathbb{E}_x\vert\vert f_{k-1}(x)\vert\vert_2 - 
	n_k\mathbb{E}_x\mathbb{E}_y
	\mathbb{E}_{w_1}
	\phi(\langle w_1, f_{k-1}(x)\rangle)\phi(\langle w_1, f_{k-1}(y)\rangle) \\
	&\geq
	C\sqrt{n_0}\beta_kn_k\prod_{l=1}^{k-1}\sqrt{\beta_l}\sqrt{n_l} 
	- n_k\beta_k \\
	&=
	C\sqrt{n_0}\beta_kn_k\prod_{l=1}^{k-1}\sqrt{\beta_l}\sqrt{n_l}
\end{align*}
where to get the second inequality we have used lemma \ref{c.2}, Jensen's inequality and the fact that $\vert\phi(x)\vert \leq 1$.
In order to get an upper bound we observe
\begin{align*}
	\mathbb{E}_{W_k}\mathbb{E}_x\vert\vert f_k(x) - 
	\mathbb{E}_x[f_k(x)]\vert\vert^2_2 &\leq 
	\mathbb{E}_{W_k}\mathbb{E}_x\vert\vert f_k(x)\vert\vert_2^2 \\
	&\leq 
	\frac{C\beta_kn_k}{2}\mathbb{E}_x\vert\vert f_{k-1}(x)\vert\vert_2 \\
	&\leq C\sqrt{n_0}\beta_kn_k\prod_{l=1}^{k-1}\sqrt{\beta_l}\sqrt{n_l}.
\end{align*}
Applying Bernstein's inequality, see thm. 2.8.1 of \cite{vershynin2018high}, we get
\begin{equation*}
\frac{1}{2}\mathbb{E}_{W_k}\mathbb{E}_x\vert\vert f_k(x) - 
	\mathbb{E}_x[f_k(x)]\vert\vert^2_2 \leq 
	\mathbb{E}_x\vert\vert f_k(x) - 
	\mathbb{E}_x[f_k(x)]\vert\vert^2_2 \leq 
	\frac{3}{2}\mathbb{E}_{W_k}\mathbb{E}_x\vert\vert f_k(x) - 
	\mathbb{E}_x[f_k(x)]\vert\vert^2_2
\end{equation*}
w.p. $\geq 1 - exp(-\Omega(n_k))$ over $W_k$. Taking the intersection of that event, together with the conditioned event over $(W_l)_{l=1}^{k-1}$ gives the statement of the lemma.
\end{proof}

\begin{lemma}\label{c5}
For $0\leq k \leq L-1$ and $x \sim \mathcal{P}$. 
We have that 
\begin{equation*}
	\vert\vert\Sigma_k(x)\vert\vert_F^2 = \Theta\bigg{(} 
	(1- e^{-\beta_k^2s^2})\sqrt{n_0}\beta_kn_k\prod_{l=1}^{k-1}\sqrt{\beta_l}
	\sqrt{n_l}
	\bigg{)}
\end{equation*}
w.p. $\geq 1 - \sum_{l=1}^k2exp(-\Omega(n_l)) - 2exp(-\Omega(\sqrt{n_0}))$.
\end{lemma}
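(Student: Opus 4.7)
The plan is to unpack the Frobenius norm into a sum of independent bounded random variables, compute its expectation via a Gaussian characteristic function calculation, and close with Bernstein's inequality. Since $\phi(x) = \cos(sx)$ gives $\phi'(x) = -s\sin(sx)$, the diagonal structure of $\Sigma_k(x)$ yields
\begin{equation*}
  \|\Sigma_k(x)\|_F^2 = \sum_{j=1}^{n_k}\phi'(g_{k,j}(x))^2 = s^2\sum_{j=1}^{n_k}\sin^2\bigl(s\,g_{k,j}(x)\bigr),
\end{equation*}
where $g_{k,j}(x) = \langle w_j, f_{k-1}(x)\rangle$ and $w_j$ is the $j$-th column of $W_k$, drawn i.i.d.\ from $\mathcal{N}(0,\beta_k^2 I_{n_{k-1}})$.

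First I would condition on the joint event of Lemma~\ref{lemmac1_cos} (applied at layer $k-1$) that gives $\|f_{k-1}(x)\|_2^2 = \Theta(\sqrt{n_0}\beta_{k-1}n_{k-1}\prod_{l=1}^{k-2}\sqrt{\beta_l}\sqrt{n_l})$ with probability at least $1 - \sum_{l=1}^{k-1}2\exp(-\Omega(n_l)) - 2\exp(-\Omega(\sqrt{n_0}))$, and work over the residual randomness in $W_k$. Conditional on $f_{k-1}(x)$, each $g_{k,j}(x)$ is a centered univariate Gaussian with variance $\sigma^2 := \beta_k^2\|f_{k-1}(x)\|_2^2$. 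Using the identity $\sin^2(sZ) = \tfrac{1-\cos(2sZ)}{2}$ together with the Gaussian characteristic function $\mathbb{E}[\cos(tZ)] = e^{-t^2\sigma^2/2}$ for $Z\sim\mathcal{N}(0,\sigma^2)$, I would compute
\begin{equation*}
  \mathbb{E}_{w_j}\!\bigl[s^2\sin^2(s\,g_{k,j}(x))\bigr] = \tfrac{s^2}{2}\bigl(1 - e^{-2s^2\beta_k^2\|f_{k-1}(x)\|_2^2}\bigr),
\end{equation*}
and hence $\mathbb{E}_{W_k}\|\Sigma_k(x)\|_F^2 = \tfrac{s^2 n_k}{2}(1 - e^{-2s^2\beta_k^2\|f_{k-1}\|_2^2})$. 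On the conditioning event $\|f_{k-1}\|_2^2$ is large, so $(1 - e^{-2s^2\beta_k^2\|f_{k-1}\|_2^2})$ is bounded below by a quantity of order $(1-e^{-\beta_k^2 s^2})$, and multiplying by the scaling of $\|f_{k-1}\|_2^2$ from Lemma~\ref{lemmac1_cos} recovers the claimed order $\Theta((1-e^{-\beta_k^2 s^2})\sqrt{n_0}\beta_k n_k\prod_{l=1}^{k-1}\sqrt{\beta_l}\sqrt{n_l})$.

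Finally, each summand $s^2\sin^2(s\,g_{k,j}(x))$ is almost surely bounded by $s^2$, so it is sub-Gaussian (hence its square is sub-exponential) with $\psi_1$-norm $\mathcal{O}(s^2)$. Applying Bernstein's inequality (Theorem~2.8.1 of \cite{vershynin2018high}) to the sum of $n_k$ conditionally independent terms — mirroring the concentration argument used in the proof of Lemma~\ref{lemmac1_cos} — yields
\begin{equation*}
  \tfrac{1}{2}\mathbb{E}_{W_k}\|\Sigma_k(x)\|_F^2 \leq \|\Sigma_k(x)\|_F^2 \leq \tfrac{3}{2}\mathbb{E}_{W_k}\|\Sigma_k(x)\|_F^2
\end{equation*}
with probability at least $1 - 2\exp(-\Omega(n_k))$ over $W_k$. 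Taking the intersection with the conditioning event from Lemma~\ref{lemmac1_cos} gives the claimed probability bound $1 - \sum_{l=1}^{k}2\exp(-\Omega(n_l)) - 2\exp(-\Omega(\sqrt{n_0}))$.

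The main obstacle I anticipate is the bookkeeping in reconciling the form $(1-e^{-\beta_k^2 s^2})$ that appears in the statement with the form $(1 - e^{-2s^2\beta_k^2\|f_{k-1}\|_2^2})$ that arises naturally from the Gaussian integral; since $\|f_{k-1}\|_2^2$ grows with the widths on the good event, the exponential in the latter is essentially negligible, and both expressions are $\Theta(1)$ quantities depending only on $\beta_k$ and $s$. Absorbing the constant factor into the $\Theta(\cdot)$ notation is the delicate step. The rest is routine: the Gaussian integral is standard, and the Bernstein step is identical in spirit to the one in Lemma~\ref{lemmac1_cos} since the summands are bounded.
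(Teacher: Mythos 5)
Your proof follows essentially the same route as the paper's: unpack $\|\Sigma_k(x)\|_F^2$ into a sum of i.i.d.\ bounded terms $\phi'(\langle w_j, f_{k-1}(x)\rangle)^2$, compute the conditional expectation via the Gaussian integral $\mathbb{E}[\sin^2(sZ)]$ using the $\sin^2$ half-angle identity, plug in the scaling of $\|f_{k-1}(x)\|_2^2$ from Lemma~\ref{lemmac1_cos}, and close with a concentration inequality (you invoke Bernstein where the paper uses Hoeffding; both are fine since the summands are bounded). The bookkeeping issue you flag---reconciling $(1-e^{-2s^2\beta_k^2\|f_{k-1}\|_2^2})$ with the statement's $(1-e^{-\beta_k^2 s^2})$---is a gloss present in the paper's own proof as well, so your proposal is consistent with the source.
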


\begin{proof}
We first observe that lemma \ref{lemmac1_cos} implies that 
$\vert\vert f_{k-1}(x)\vert\vert \neq 0$ w.p. $\geq 1 - \sum_{l=0}^{k-1}2exp(-2s\Omega(n_k))$
which in turn implies that 
$f_{k-}(x) \neq 0$ w.h.p. $\geq 1 - \sum_{l=0}^{k-1}2exp(-2s\Omega(n_k))$ over
$(W_l)_{l=1}^{k-1}$ and $x$. We condition on that event and obtain bounds on 
$W_k$. Taking the intersection of the two events will then complete the proof.

Write $W_k = [w_1,\ldots ,w_{n_{k}}]$. Then 
$\vert\vert\Sigma_k(x)\vert\vert_F^2 =
\sum_{i=1}^{n_k}\phi'(\langle f_{k-1}(x), w_i\rangle)^2$. Thus 
$\mathbb{E}_{W_k}\vert\vert\Sigma_k(x)\vert\vert_F^2 = 
n_k\mathbb{E}_{w_1}[\phi'(\langle f_{k-1}(x), w_1\rangle)^2]$, by independence.
Note that $\langle f_{k-1}(x), w_1\rangle$ is a univariate random variable with
distribution $\mathcal{N}(0, \beta_k^2\vert\vert f_{k-1}(x)\vert\vert_2^2)$ and that 
\begin{equation*}
n_k\mathbb{E}_{w_1}[\phi'(\langle f_{k-1}(x), w_1\rangle)^2] = 
sn_k\mathbb{E}_{w_1}[sin(s\langle f_{k-1}(x), w_1\rangle)^2].
\end{equation*}
In order to calculate the above expectation, we need to calculate the integral
\begin{equation*}
\int_{\R}sin^2(sw)e^{-\frac{w^2}{\beta_k^2\vert\vert f_{k-1}\vert\vert_2^2}}dw.
\end{equation*}
This is done by using the identity $sin^2(sw) = 
\frac{1}{2} - \frac{cos(2sw)}{2}$. The integral then become
\begin{align*}
\int_{\R}sin^2(sw)e^{-\frac{w^2}{\beta_k^2\vert\vert f_{k-1}\vert\vert_2^2}}dw 
&= \int_{\R}\bigg{(}\frac{1}{2} - \frac{cos^2(2sw)}{2} \bigg{)}
e^{-\frac{w^2}{\beta_k^2\vert\vert f_{k-1}\vert\vert_2^2}}dw \\
&=
\frac{1}{2}\int_{\R}e^{-\frac{w^2}{\beta_k^2\vert\vert f_{k-1}\vert\vert_2^2}}dw -
\frac{1}{2}\int_{\R}\frac{cos^2(2sw)}{2}
e^{-\frac{w^2}{\beta_k^2\vert\vert f_{k-1}\vert\vert_2^2}}dw.
\end{align*}
The first integral is a standard Gaussian integral
\begin{equation*}
\frac{1}{2}\int_{\R}e^{-\frac{w^2}{\beta_k^2\vert\vert f_{k-1}\vert\vert_2^2}}dw
=
\frac{\beta_k\vert\vert f_{k-1}(x)\vert\vert_2}{2}.
\end{equation*}
The second integral was computed in the proof of lemma \ref{lemmac1_cos}
\begin{equation*}
\frac{1}{2}\int_{\R}\frac{cos^2(2sw)}{2}
e^{-\frac{w^2}{\beta_k^2\vert\vert f_{k-1}\vert\vert_2^2}}dw = 
\bigg{(} 
\frac{e^{-\beta_k^2\vert\vert f_{k-1}\vert\vert^2_2s^2}}{2}
\bigg{)}\beta_k\vert\vert f_{k-1}(x)\vert\vert_2.
\end{equation*}
These computations imply
\begin{equation*}
\mathbb{E}_{W_k}\vert\vert\Sigma_k(x)\vert\vert_F^2 = 
\frac{n_k\beta_k\vert\vert f_{k-1}\vert\vert_2}{2}
\bigg{(} 
1- e^{-\beta_k^2\vert\vert f_{k-1}\vert\vert^2_2s^2}
\bigg{)}.
\end{equation*}
We now apply Hoeffding's inequality, see thm. 2.2.6 of \cite{vershynin2018high}, to get
\begin{equation*}
\bigg{\vert}
\vert\vert\Sigma_k(x)\vert\vert_F^2 - \mathbb{E}_{W_k}\vert\vert \Sigma_k(x)
\vert\vert_F^2
\bigg{\vert} \leq 
\frac{1}{2}\mathbb{E}_{W_k}\vert\vert\Sigma_k(x)\vert\vert_F^2
\end{equation*}
w.p. $\geq 1 - 2exp(-
\frac{\big{(}\mathbb{E}_{W_k}\vert\vert\Sigma_k(x)
\vert\vert_F^2\big{)}^2}{4n_k})$. Using the estimate for 
$\mathbb{E}_{W_k}\vert\vert\Sigma_k(x)\vert\vert_F^2$ that we obtained 
and taking the intersection of the two events
proves 
the lemma.
\end{proof}

\begin{lemma}\label{c6}
For any $1 \leq k \leq L-1$, $k \leq p \leq L-1$ and $x \sim \mathcal{P}$.
We have that
\begin{equation*}
\bigg{\vert}\bigg{\vert}
\Sigma_k(x)\prod_{l=k+1}^pW_l\Sigma_l(x)
\bigg{\vert}\bigg{\vert}^2_F = 
\Theta\bigg{(}
s^2(1 - e^{-\beta_k^2s^2})\sqrt{n_0}\beta_kn_k\beta_pn_p
\prod_{l=1, l\neq k}^{p-1}
\sqrt{\beta_l}\sqrt{n_l}
\bigg{)}
\end{equation*}
w.p. $\geq 1 - \sum_{l=0}^p2exp(-\Omega(n_l))$ over $(W_l)_{l=1}^p$ and 
$x \sim \mathcal{P}$.
\end{lemma}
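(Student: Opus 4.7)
The plan is to proceed by induction on $p - k$. For the base case I take $p = k+1$ and establish $\vert\vert \Sigma_k(x) W_{k+1} \Sigma_{k+1}(x)\vert\vert_F^2$ directly using the same column expansion below together with the estimate for $\vert\vert \Sigma_k(x)\vert\vert_F^2$ provided by Lemma \ref{c5}.

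For the inductive step, assume the estimate for some $p - 1 \geq k+1$. Set $M := \Sigma_k(x)\prod_{l=k+1}^{p-1}W_l\Sigma_l(x) \in \R^{n_k \times n_{p-1}}$, and condition on $(W_l)_{l=1}^{p-1}$ and $x$ on the intersection of the inductive event and the event of Lemma \ref{lemmac1_cos}. Writing $W_p = [z_1, \ldots, z_{n_p}]$ with $z_j \sim_{i.i.d} \mathcal{N}(0, \beta_p^2 I_{n_{p-1}})$, and noting that the $j$-th diagonal entry of $\Sigma_p(x)$ is $\phi'(\langle z_j, f_{p-1}(x)\rangle)$, a column expansion gives
\begin{equation*}
\vert\vert M W_p \Sigma_p(x)\vert\vert_F^2 = \sum_{j=1}^{n_p} \phi'(\langle z_j, f_{p-1}(x)\rangle)^2 \vert\vert M z_j\vert\vert_2^2,
\end{equation*}
a sum of conditionally independent summands.

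To compute the conditional mean of a summand I use the orthogonal decomposition $z_j = \alpha_j u + v_j$ with $u = f_{p-1}(x)/\vert\vert f_{p-1}(x)\vert\vert_2$, $\alpha_j \sim \mathcal{N}(0,\beta_p^2)$, and $v_j \sim \mathcal{N}(0, \beta_p^2(I - uu^T))$ independent of $\alpha_j$. Then $\langle z_j, f_{p-1}\rangle = \alpha_j\vert\vert f_{p-1}\vert\vert_2$ depends only on $\alpha_j$, while $\vert\vert Mz_j\vert\vert_2^2 = \alpha_j^2\vert\vert Mu\vert\vert_2^2 + 2\alpha_j\langle Mu, Mv_j\rangle + \vert\vert Mv_j\vert\vert_2^2$. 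Integrating out $v_j$ first (the cross term has zero mean and $\mathbb{E}_{v_j}\vert\vert Mv_j\vert\vert_2^2 = \beta_p^2(\vert\vert M\vert\vert_F^2 - \vert\vert Mu\vert\vert_2^2)$), and then integrating over $\alpha_j$ using the Gaussian-cosine identities already developed in the proofs of Lemmas \ref{lemmac1_cos} and \ref{c5} (together with the parallel identity for $\mathbb{E}_\alpha[\phi'^2 \alpha^2]$ obtained by differentiating the Gaussian MGF twice in its frequency parameter), the $\vert\vert Mu\vert\vert_2^2$ contributions cancel between the two integrals. Using the lower bound on $\vert\vert f_{p-1}\vert\vert_2$ from Lemma \ref{lemmac1_cos} to kill the resulting exponentials, the conditional mean reduces to $\Theta(s^2 \beta_p^2 \vert\vert M\vert\vert_F^2)$.

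Then I would apply Bernstein's inequality (Theorem 2.8.1 of \cite{vershynin2018high}) to the sum. Since $\phi'^2 \leq s^2$, each summand is dominated by the Gaussian quadratic form $s^2\vert\vert Mz_j\vert\vert_2^2$, whose sub-exponential norm is $O(\beta_p^2\vert\vert M\vert\vert_{op}^2)$ by a Hanson--Wright type bound. Tracking $\vert\vert M\vert\vert_{op}$ alongside $\vert\vert M\vert\vert_F$ through the induction (using the standard Gaussian spectral bound $\vert\vert W_l\vert\vert_{op} = O(\beta_l(\sqrt{n_l} + \sqrt{n_{l-1}}))$ w.h.p.\ combined with $\vert\vert \Sigma_l\vert\vert_{op} \leq s$), Bernstein then yields $\vert\vert M W_p \Sigma_p\vert\vert_F^2 = \Theta(n_p s^2 \beta_p^2 \vert\vert M\vert\vert_F^2)$ with probability at least $1 - 2\exp(-\Omega(n_p))$. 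Substituting the inductive estimate for $\vert\vert M\vert\vert_F^2$ and reorganizing the $\beta$- and $n$-factors so that the old $\beta_{p-1}n_{p-1}$ becomes $\sqrt{\beta_{p-1}}\sqrt{n_{p-1}}$ in the new expression while the new $n_p$ and $\beta_p$ factors are absorbed into $\beta_p n_p$ produces the claimed estimate; union bounding over the $W_p$ event and the inductive event yields the stated probability $1 - \sum_{l=0}^p 2\exp(-\Omega(n_l))$.

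The main obstacle is the cancellation step in the conditional mean: the naive factorization $\mathbb{E}[\phi'^2]\mathbb{E}[\vert\vert Mz\vert\vert_2^2]$ is incorrect because $\phi'(\langle z, f_{p-1}\rangle)$ and $\vert\vert Mz\vert\vert_2^2$ depend on the same Gaussian $z$, so one must exploit the orthogonal decomposition and the two Gaussian MGF identities in a coordinated way to cancel the direction-sensitive $\vert\vert Mu\vert\vert_2^2$ contributions and preserve a clean $\vert\vert M\vert\vert_F^2$ scaling. A secondary technical burden is coordinating operator-norm control of $M$ across all layers of the induction so that Bernstein yields the claimed $2\exp(-\Omega(n_l))$ tails.
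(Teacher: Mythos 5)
Your proposal and the paper's proof share the same skeleton: induction on $p$ using the column expansion $\|\Sigma_k \prod_{l=k+1}^p W_l\Sigma_l\|_F^2 = \sum_{j=1}^{n_p}\phi'(\langle z_j,f_{p-1}\rangle)^2\|B(p-1)z_j\|_2^2$, followed by Bernstein over the $n_p$ conditionally independent summands. Where you genuinely diverge is the conditional mean. The paper sidesteps the dependence between $\phi'(\langle z_j,f_{p-1}\rangle)^2$ and $\|B(p-1)z_j\|_2^2$ by picking a piecewise-constant minorant $\chi\le\sin^2$ and then asserting $\mathbb{E}_{w}[\|B(p-1)w\|_2^2\,\chi]=\beta_p^2\|B(p-1)\|_F^2$; as written this step is not justified, since $\chi$ still depends on the same Gaussian $w$ that drives $\|B(p-1)w\|_2^2$ and cannot simply be peeled off. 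Your orthogonal decomposition $z_j=\alpha_j u+v_j$ with $u=f_{p-1}/\|f_{p-1}\|_2$ handles exactly this: integrating $v_j$ first gives $\mathbb{E}_{\alpha}[\phi'^2]\beta_p^2\|M\|_F^2+(\mathbb{E}_\alpha[\phi'^2\alpha^2]-\mathbb{E}_\alpha[\phi'^2]\beta_p^2)\|Mu\|_2^2$, and the second coefficient equals $2s^4\|f_{p-1}\|_2^2\beta_p^4 e^{-2s^2\beta_p^2\|f_{p-1}\|_2^2}\ge 0$; so the ``cancellation'' you describe is not exact, but the residual is nonnegative and exponentially negligible once you invoke the $\|f_{p-1}\|_2$ lower bound from Lemma \ref{lemmac1_cos}, which is enough for the $\Theta(s^2\beta_p^2\|M\|_F^2)$ conclusion — you may want to phrase it as ``nonnegative and asymptotically negligible'' rather than ``cancel''. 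On the concentration side your Hanson--Wright bound $O(\beta_p^2\|M\|_{op}^2)$ on the $\psi_1$-norm is tighter than the paper's $O(\beta_p^2\|B(p-1)\|_F^2)$, but the paper's cruder bound already yields $1-2\exp(-\Omega(n_p))$ with $t=\tfrac12\mathbb{E}\|B(p)\|_F^2=\Theta(n_ps^2\beta_p^2\|B(p-1)\|_F^2)$, so the extra operator-norm bookkeeping you propose is not actually needed for this lemma (the paper defers $\|\cdot\|_{op}$ control to Lemma \ref{d2b}, where it is used for a different purpose). Net: your route is sound and in fact more careful than the paper's at the one genuinely delicate point, at the cost of some unnecessary overhead in the tail bound.
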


\begin{proof}
We want to bound 
$\vert\vert \Sigma_k(X) \prod_{l=k+1}^pW_l\Sigma_l(x)\vert\vert^2_F$ for 
$k \leq p \leq l-1$, and any $k \in [L-1]$, $x \sim \mathcal{P}$.

When $p = k$, the quantity reduces to $\vert\vert \Sigma_k(x)\vert\vert_F^2$, which we know how to bound by lemma \ref{c5}. 

Let $B(p) = \Sigma_{k}(x)\prod_{l=k+1}^pW_l\Sigma_l(x) =
\Sigma_{k}(x)\bigg{(}\prod_{l=k+1}^{p-1}W_l\Sigma_l(x)\bigg{)}W_p\Sigma_p(x) =
B(p-1)W_p\Sigma_p(x)$.

Write $W_p = [w_1,\ldots,w_{n_p}]$ and observe that
\begin{equation*}
\vert\vert B(p)\vert\vert_F^2 = \sum_{i=1}^{n_p}
\vert\vert B(p-1)w_i\vert\vert_2^2\phi'(\langle f_{p-1}(x), w_i\rangle)^2.
\end{equation*}
Taking the expectation we obtain
\begin{equation*}
	\mathbb{E}_{W_p}\vert\vert B(p)\vert\vert_F^2 = 
	n_p\mathbb{E}_{w_1}\vert\vert B(p-1)w_1\vert\vert^2_2
	\phi'(\langle f_{p-1}(x), w_i\rangle)^2.
\end{equation*}
The derivative $\phi'(\langle f_{p-1}(x), w_i\rangle)^2 
= s^2sin(\langle f_{p-1}(x), w_i\rangle)^2$. Pick a piecewise non-negative, non-zero, measurable constant function $\chi$ so that 
$0 \leq \chi(x) \leq sin(x)^2$, see figure \ref{fig:low_bound_sin} for a depiction of $\chi$.

\begin{figure}[t]
\includegraphics[width=1.0\linewidth]{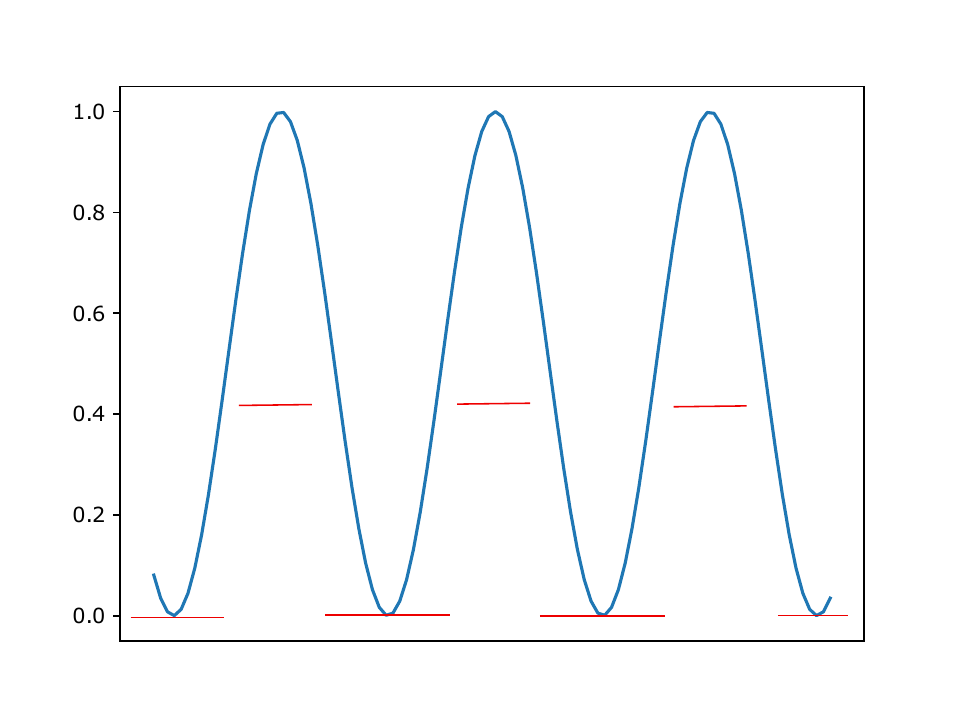}
\vspace{-1.5cm}
\caption{An example of the function $\chi(x)$ in red used in the proof of lemma \ref{c6}. The blue curve represents $sin(x)^2$}\label{chi_bound_sin}
\end{figure}\label{fig:low_bound_sin}

Then observe that
\begin{align*}
\mathbb{E}_{W_p}\vert\vert B(p)\vert\vert_F^2 &\geq 
n_p\mathbb{E}_{w_1}\vert\vert B(p-1)w_1\vert\vert_2^2\chi s^2 \\
&= n_ps^2\beta_p^2\vert\vert B(p-1)\vert\vert_F^2.
\end{align*}
To get an upper bound, we simply observe that $\phi'$ is a bounded function. Therefore, 
\begin{equation*}
\mathbb{E}_{W_p}\vert\vert B(p)\vert\vert_F^2 \leq 
s^2n_p\beta_p\vert\vert B(p-1)\vert\vert_F^2. 
\end{equation*}
By induction we then get 
\begin{equation*}
\mathbb{E}_{W_p}\vert\vert B(p)\vert\vert_F^2 = \Theta\bigg{(}
(1-e^{-\beta_k^2s^2})s^2\sqrt{n_0}\beta_kn_k\beta_pn_p\prod_{l=1, l\neq k}^{p-1}\sqrt{\beta_l}\sqrt{n_l}
\bigg{)}
\end{equation*}
where we have used lemma \ref{c5} in order to induct.

Once we have an expectation bound we can apply Bernstein's inequality, see thm. 2.8.1 of \cite{vershynin2018high}. In order to do this, we need to compute the sub-Gaussian norm. By using the fact that $\phi'(x)^2$ is a bounded function we have
\begin{align*}
	\bigg{\vert}\bigg{\vert}
	\vert\vert B(p-1)w_1\vert\vert^2_2
	\phi'(\langle f_{p-1}(x), w_i\rangle)^2
	\bigg{\vert}\bigg{\vert}_{\psi_1} &\leq 
	C\bigg{\vert}\bigg{\vert} \vert\vert B(p-1)w_1\vert\vert_2
	\bigg{\vert}\bigg{\vert}^2_{\psi_2} \\
	&\leq C\beta_p^2\vert\vert B(p-1)\vert\vert_F^2
\end{align*}
for some $C > 0$.

Once we have the sub-Gaussian norm estimate, we can apply Bernstein's inequality 
to get
\begin{equation*}
\frac{1}{2}\mathbb{E}_{W_p}\vert\vert B(p)\vert\vert_F^2 \leq 
\vert\vert B(p)\vert\vert_F^2 \leq \frac{3}{2}\mathbb{E}_{W_p}
\vert\vert B(p)\vert\vert_F^2
\end{equation*}
w.p. $\geq 1 - 2exp(-\Omega(n_p))$ over $W_p$. Taking the intersection of this event with the previous events over $(W_l)_{l=1}^{p-1}$ and $x$ gives the result.
\end{proof}

\section{Proof of lemma \ref{G-bound}.}\label{sec_G-bound_proof}

The goal of this section is to prove lemma \ref{G-bound}. We start with preliminary lemmas.

\begin{lemma}\label{d2a}
For neural network with activation $\phi(x) = cos(sx)$, we have
$\vert\vert\Sigma_k(x)\vert\vert_{op} \leq s$. 
\end{lemma}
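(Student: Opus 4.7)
The proof plan is essentially a one-line observation, so the plan will be short. First, I would note that by definition $\Sigma_k(x) = D([\phi'(g_{k,j}(x))]_{j=1}^{n_k})$ is a diagonal matrix, and the operator norm of a diagonal matrix coincides with the maximum absolute value of its diagonal entries:
\begin{equation*}
    \|\Sigma_k(x)\|_{op} = \max_{j \in [n_k]} |\phi'(g_{k,j}(x))|.
\end{equation*}

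Next, I would compute the derivative of the activation explicitly. Since $\phi(x) = \cos(sx)$, the chain rule gives $\phi'(x) = -s\sin(sx)$, so $|\phi'(x)| = s|\sin(sx)|$ for every $x \in \R$. The final step is to apply the pointwise bound $|\sin(\cdot)| \leq 1$, which yields $|\phi'(x)| \leq s$ uniformly in $x$. Plugging this into the first display, we obtain $\|\Sigma_k(x)\|_{op} \leq s$ regardless of the value of the pre-activation $g_{k,j}(x)$, and in particular independently of the realization of the weights $(W_l)_{l=1}^k$ and the input $x$.

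There is no genuine obstacle here: the statement is a deterministic consequence of the explicit form of $\phi$ and the fact that the diagonal structure of $\Sigma_k$ reduces its operator norm to an $\ell^\infty$ computation. The lemma is stated as a standalone bound so that later arguments (e.g., controlling products of the form $\Sigma_k(x)\prod W_l\Sigma_l(x)$ in Lemma \ref{G-bound}) can factor out the deterministic $s$-Lipschitz behavior of the activation before applying concentration inequalities to the Gaussian weight matrices.
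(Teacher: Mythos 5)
Your proof is correct and takes essentially the same approach as the paper: both observe that $\Sigma_k(x)$ is diagonal with entries $\phi'(g_{k,j}(x)) = -s\sin(s\,g_{k,j}(x))$, so each entry is bounded in absolute value by $s$ and hence the operator norm is at most $s$. Your write-up is slightly more explicit in identifying the operator norm of a diagonal matrix with the maximum absolute diagonal entry, but the argument is identical.
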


\begin{proof}
$\Sigma_k(x)$ is a diagonal matrix consisting of derivatives of the activation function evaluated at the pre-activated neuron. When the activation is 
$cos(sx)$, the derivative is $-s\cdot sin(sx)$. In particular the derivative is bounded above by $s$. 
\end{proof}

\begin{lemma}\label{d2b}
Let $1 \leq k \leq L-1$ and
let $A = \Sigma_k(x)\prod_{l=k+1}^{L-1}W_l\Sigma_l(x)$. Then we have
\begin{equation*}
\vert\vert A\vert\vert_{op}^2 = \mathcal{O}\bigg{(}
\frac{sn_k}{\min_{l \in [k,L-1]}n_l}\prod_{l=k+1}^{L-1}n_l\beta_l^2
\bigg{)}
\end{equation*}
w.p. $\geq 1 - \sum_{l=0}^k2exp(-\Omega(n_l))$ over $(W_l)_{l=1}^k$ and $x$.
\end{lemma}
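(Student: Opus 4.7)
The plan is to bound $\vert\vert A\vert\vert_{op}$ by repeated application of operator-norm submultiplicativity, handling the deterministic diagonal factors $\Sigma_l(x)$ via Lemma~\ref{d2a} and the Gaussian weight matrices $W_l$ via a standard spectral-norm concentration result. Explicitly, I would first write
\begin{equation*}
\vert\vert A\vert\vert_{op} \leq \vert\vert\Sigma_k(x)\vert\vert_{op}\prod_{l=k+1}^{L-1}\vert\vert W_l\vert\vert_{op}\,\vert\vert\Sigma_l(x)\vert\vert_{op},
\end{equation*}
and apply Lemma~\ref{d2a} to each diagonal factor to pull out a deterministic $s^{L-k}$. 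Squaring yields $\vert\vert A\vert\vert_{op}^2 \leq s^{2(L-k)}\prod_{l=k+1}^{L-1}\vert\vert W_l\vert\vert_{op}^2$; since $s$ is a fixed frequency and $L-k$ is a constant, the prefactor $s^{2(L-k)}$ is absorbed into the $\mathcal{O}(\cdot)$ and can be weakened to the lone $s$ appearing in the claim.

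Next, for each $l\in\{k+1,\dots,L-1\}$ I would invoke a standard Gaussian matrix spectral-norm estimate (e.g.\ Theorem~4.4.5 of \cite{vershynin2018high}) to obtain $\vert\vert W_l\vert\vert_{op}^2 \leq C\beta_l^2(n_{l-1}+n_l)$ with probability at least $1-2\exp(-\Omega(n_l))$, and then union-bound these events. To massage the product into the form stated in the lemma, I would use $n_{l-1}+n_l \leq 2\max(n_{l-1},n_l) \leq 2\,n_{l-1}n_l/\min_{j\in[k,L-1]}n_j$ and telescope so that $\prod_{l=k+1}^{L-1}(n_{l-1}+n_l)$ reduces to a constant multiple of $\frac{n_k\prod_{l=k+1}^{L-1}n_l}{\min_{l\in[k,L-1]}n_l}$, matching the form in the statement.

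The subtlety I expect to be the main obstacle is the telescoping itself: peeling one factor of $\min$ off the denominator only cancels a single width, yet each interior $n_l$ potentially appears twice across consecutive $\max(n_{l-1},n_l)$ terms, so one must verify case-by-case on the location of the minimizing width that the stated bound dominates; the alternative is to accept a slightly looser constant in $\mathcal{O}(\cdot)$ and absorb the extra factors there. A secondary issue is that the claimed probability is quantified over $(W_l)_{l=1}^k$ and the data, whereas the randomness actually entering the argument above is in $W_{k+1},\dots,W_{L-1}$. I would read this as a typographical shorthand for the event over all weights and explicitly record, in the body of the proof, the union bound $1-\sum_{l=k+1}^{L-1}2\exp(-\Omega(n_l))$ that the concentration step really furnishes.
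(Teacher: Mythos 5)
Your opening step — $\Vert A\Vert_{op}\le\Vert\Sigma_k\Vert_{op}\prod_{l=k+1}^{L-1}\Vert W_l\Vert_{op}\Vert\Sigma_l\Vert_{op}$ together with $\Vert\Sigma_l\Vert_{op}\le s$ and $\Vert W_l\Vert_{op}^2\lesssim\beta_l^2(n_{l-1}+n_l)$ — is sound as far as it goes, but it cannot be massaged into the claimed bound, and the subtlety you flagged in the telescoping is in fact fatal, not merely a constant-bookkeeping nuisance. The naive product gives, up to constants, $\prod_{l=k+1}^{L-1}\beta_l^2\max(n_{l-1},n_l)$, whereas the lemma asserts $\frac{n_k}{\min_{l\in[k,L-1]}n_l}\prod_{l=k+1}^{L-1}n_l\beta_l^2$. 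Take $L-1=k+2$ with $n_k$ the smallest of $n_k,n_{k+1},n_{k+2}$ and $n_{k+1}\ge n_{k+2}$: the naive product is $\beta_{k+1}^2\beta_{k+2}^2 n_{k+1}^2$, while the target is $\beta_{k+1}^2\beta_{k+2}^2 n_{k+1}n_{k+2}$, so the naive bound overshoots by a factor $n_{k+1}/n_{k+2}$ which is unbounded in the asymptotic regime. Your proposed substitution $\max(n_{l-1},n_l)\le n_{l-1}n_l/\min_j n_j$ only makes this worse, yielding $\frac{n_kn_{L-1}\prod_{l=k+1}^{L-2}n_l^2}{m^{L-1-k}}$ with $m=\min_{l\in[k,L-1]}n_l$, which exceeds the claimed bound by the factor $\prod_{l=k+1}^{L-2}(n_l/m)\ge 1$; this is an asymptotically growing quantity, not something absorbable into $\mathcal{O}(\cdot)$.

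The paper avoids this by splitting off only the outermost $\Sigma_k$ via submultiplicativity, then bounding $\bigl\Vert\prod_{l=k+1}^{L-1}W_l\Sigma_l(x)\bigr\Vert_{op}$ directly by induction using an $\epsilon$-net argument (inherited from Nguyen et al.), which uses a single union bound over a net to pick up only one reciprocal minimum-width factor for the whole alternating product rather than one factor of $\max(n_{l-1},n_l)$ per weight matrix. The crucial point is that the operator norm of an alternating product of Gaussian matrices and diagonal contraction matrices concentrates much tighter than the product of individual operator norms; the diagonal $\Sigma_l$'s inserted between the $W_l$'s prevent the worst-case alignment of left- and right-singular vectors that the submultiplicative bound implicitly allows. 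So you need to replace the factorwise bound with an $\epsilon$-net argument (or an equivalent chaining/Hanson--Wright style estimate) on the full product. Your secondary observation about the probability being quantified over $(W_l)_{l=1}^k$ is a fair catch — the randomness driving the concentration does come from $W_{k+1},\dots,W_{L-1}$ (and $\Sigma_l$ also depends on the earlier weights and $x$), so the stated event is indeed over all relevant randomness and the index range is a typographical slip — but this is cosmetic relative to the missing $\epsilon$-net step.
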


\begin{proof}
The proof of this proceeds by induction and follows the exact same strategy 
as in the proof in D.2 in \cite{nguyen2021tight}.

We first note that we have the estimate
\begin{equation}
\vert\vert A\vert\vert_{op} \leq \vert\vert\Sigma(x)\vert\vert_{op}
\bigg{\vert}\bigg{\vert} 
\prod_{l=k+1}^{L-1}W_l\Sigma_l(x)\bigg{\vert}\bigg{\vert}_{op}. 
\end{equation}
We then observe that $\vert\vert \Sigma_k(x)\vert\vert_{op}$ can be bounded
by lemma \ref{d2a}. This means we need only bound 
$\bigg{\vert}\bigg{\vert} 
\prod_{l=k+1}^{L-1}W_l\Sigma_l(x)\bigg{\vert}\bigg{\vert}_{op}$. The proof of this follows by induction on the length $(L-1) - (k+1) = L-k - 2$.

The base case follows by applying operator norm bounds of Gaussian matrices, see Theorem 2.13 in \cite{davidson2001local}.
\begin{equation*}
\big{\vert}\big{\vert} 
W_{L-1}\Sigma_{L-1}(x)\big{\vert}\big{\vert}_{op} \leq 
C(s)\big{\vert}\big{\vert}W_{L-1}\big{\vert}\big{\vert}^2_{op} = 
\mathcal{O}(\beta_{L-1}^2\max\{n_{L-1}, n_{L-2}\}).
\end{equation*}
The general case now follows the $\epsilon$-net argument used in \cite{nguyen2021tight}.
\end{proof}

We are now in a position to give the proof of lemma \ref{G-bound}.

\begin{proof}[\textbf{proof of lemma \ref{G-bound}}]

We need to estimate the quantity 
\begin{equation*}
\bigg{\vert}\bigg{\vert}
\Sigma_k(x)\bigg{(}\prod_{l=k+1}^{L-1}W_l\Sigma_l(x)\bigg{)}W_L
\bigg{\vert}\bigg{\vert}^2_2.
\end{equation*}
Let $A = \Sigma_k(x)\prod_{l=k+1}^{L-1}W_l\Sigma_l(x)$. By lemma \ref{c6}
we have that
\begin{equation*}
\vert\vert A\vert\vert_F^2 = 
\Theta\bigg{(}
s^2(1 - e^{-\beta_k^2s^2})\sqrt{n_0}\beta_kn_k\beta_{L-1}n_{L-1}
\prod_{l=1, l\neq k}^{L-2}
\sqrt{\beta_l}\sqrt{n_l}
\bigg{)}
\end{equation*}
w.p. $\geq 1 - \sum_{l=0}^k2exp(-\Omega(n_l))$ over $(W_{l})_{l=1}^{L-1}$ and 
$x$.

Furthermore by lemma \ref{d2b}, we have the operator norm estimate
\begin{equation*}
\vert\vert A\vert\vert_{op}^2 = \mathcal{O}\bigg{(}
\frac{sn_k}{\min_{l \in [k,L-1]}n_l}\prod_{l=k+1}^{L-1}n_l\beta_l^2
\bigg{)}
\end{equation*}
w.p. $\geq 1 - \sum_{l=0}^k2exp(-\Omega(n_l))$ over $(W_{l})_{l=1}^{L-1}$ and 
$x$.

As $A$ only depends on $(W_{l})_{l=1}^{L-1}$ and $x$, we condition on the above two events over $(W_{l})_{l=1}^{L-1}$ and $x$, and obtain a bound over $W_L$. 
Applying the Hanson-Wright 
inequality, see thm. 6.2.1 of \cite{vershynin2018high}, we get
\begin{equation*}
\bigg{\vert}
\vert\vert AW_L\vert\vert_2^2 - \mathbb{E}_{W_L}\vert\vert AW_L\vert\vert^2_2
\bigg{\vert}
\leq 
\frac{3}{2}
\mathbb{E}_{W_L}\vert\vert AW_L\vert\vert^2_2
\end{equation*}
w.p. $\geq 1 - exp\bigg{(}-\Omega\bigg{(}\frac{\vert\vert A\vert\vert_F^2}
{\max_{i}\vert\vert (AW_L)_i\vert\vert_{\psi_2}}\bigg{)}\bigg{)}$.

Note that $\vert\vert (AW_L)_i\vert\vert_{2}^2 \leq 
\vert\vert B\vert\vert_{op}^2\vert\vert W_L\vert\vert_2^2$. It follows that
for each $i$ that 
$\vert\vert (AW_L)_i\vert\vert_{\psi_2} = 
\mathcal{O}(\vert\vert B\vert\vert_{op})$. We therefore find that
\begin{equation*}
\vert\vert AW_L\vert\vert_2^2 = 
\Theta\bigg{(}
s^2(1 - e^{-\beta_k^2s^2})\sqrt{n_0}\beta_kn_k\beta_{L}n_L
\prod_{l=1, l\neq k}^{L-1}
\sqrt{\beta_l}\sqrt{n_l}
\bigg{)}
\end{equation*}
w.p. $\geq 1 - 2exp(-\Omega(n_k))$. By taking the intersection of this event with the one we conditioned over, we get the result.
\end{proof}

\section{Proof of Theorem \ref{sing_val_feat}.}\label{sec_proof_sing_val_feat}

We start with the following lemma, whose proof is a simple computation, see
E.3 of \cite{nguyen2021tight}.

\begin{lemma}\label{centfeat_est}
Let $\widetilde{F}_k = F_k - \mathbb{E}_{X}[F_k]$ denote the centred features. 
Let
\begin{equation*}
    \mu = 
\mathbb{E}_{x \sim \mathcal{P}}[f_k(x)] \in \R^{n_k} \text{ and }
\Lambda = Diag(F_k\mu - \vert\vert \mu\vert\vert^2_21_N)
\end{equation*}
where 
$1_N \in \R^N$ is the column vector of $1's$. Then for $1 \leq k \leq L-1$ we have
\begin{equation*}
F_kF_k^T \geq \bigg{(}
\widetilde{F}_k\widetilde{F}_k^T - 
\frac{\Lambda 1_N1_N^T\Lambda}{\vert\vert\mu\vert\vert^2_2}
\bigg{)}
\end{equation*}
where $\geq$ sign is used in the sense of positive semi-definite matrices, meaning
\begin{equation*}
	F_kF_k^T - 
	\bigg{(}
\widetilde{F}_k\widetilde{F}_k^T - 
\frac{\Lambda 1_N1_N^T\Lambda}{\vert\vert\mu\vert\vert^2_2}
\bigg{)} \geq 0
\end{equation*}
i.e. the difference is positive semi-definite.
\end{lemma}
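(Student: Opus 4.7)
The claim is a purely algebraic matrix identity, so the plan is a direct computation rather than a probabilistic argument. First I would unpack the notation: since $\mu = \mathbb{E}_{x \sim \mathcal{P}}[f_k(x)]$ does not depend on the sample index, the expected feature matrix is the rank-one matrix $\mathbb{E}_X[F_k] = 1_N \mu^T$, hence $\widetilde{F}_k = F_k - 1_N \mu^T$. Expanding the product gives
\begin{equation*}
\widetilde{F}_k \widetilde{F}_k^T = F_k F_k^T - (F_k\mu) 1_N^T - 1_N (F_k\mu)^T + \|\mu\|_2^2\, 1_N 1_N^T,
\end{equation*}
so
\begin{equation*}
F_k F_k^T - \widetilde{F}_k \widetilde{F}_k^T = (F_k\mu) 1_N^T + 1_N (F_k\mu)^T - \|\mu\|_2^2\, 1_N 1_N^T.
\end{equation*}

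Next I would introduce the auxiliary vector $v := F_k\mu - \|\mu\|_2^2 1_N \in \R^N$, so that by definition $\Lambda = \mathrm{Diag}(v)$. The key algebraic observation is that $\Lambda 1_N = v$, which immediately gives $\Lambda 1_N 1_N^T \Lambda = v v^T$. Substituting $F_k\mu = v + \|\mu\|_2^2 1_N$ into the previous display and collecting terms, the difference I need to analyze becomes
\begin{equation*}
F_k F_k^T - \widetilde{F}_k \widetilde{F}_k^T + \frac{\Lambda 1_N 1_N^T \Lambda}{\|\mu\|_2^2} = v 1_N^T + 1_N v^T + \|\mu\|_2^2\, 1_N 1_N^T + \frac{v v^T}{\|\mu\|_2^2}.
\end{equation*}

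The final step is to recognize the right-hand side as a completed rank-one square. Expanding $\tfrac{1}{\|\mu\|_2^2}(v + \|\mu\|_2^2 1_N)(v + \|\mu\|_2^2 1_N)^T$ reproduces the four terms above exactly, and since $v + \|\mu\|_2^2 1_N = F_k\mu$, I obtain the clean identity
\begin{equation*}
F_k F_k^T - \widetilde{F}_k \widetilde{F}_k^T + \frac{\Lambda 1_N 1_N^T \Lambda}{\|\mu\|_2^2} = \frac{1}{\|\mu\|_2^2}(F_k\mu)(F_k\mu)^T,
\end{equation*}
which is a positive scalar times an outer product $u u^T$, hence positive semi-definite. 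This proves the claim. There is no real obstacle beyond careful bookkeeping; the only point worth noting is the implicit use of $\|\mu\|_2 \neq 0$, which is guaranteed (with high probability over the weights) by Lemma \ref{c.2}, while in the degenerate case $\mu = 0$ we have $\widetilde{F}_k = F_k$ and $\Lambda = 0$, so the inequality collapses to $F_k F_k^T \geq F_k F_k^T$ and holds trivially.
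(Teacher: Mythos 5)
Your proof is correct, and since the paper does not reproduce the argument (it defers to Lemma E.3 of Nguyen \& Mondelli, 2021, calling it ``a simple computation''), you have supplied exactly the computation being alluded to. The clean closed form you derive,
\begin{equation*}
F_k F_k^T - \widetilde{F}_k \widetilde{F}_k^T + \frac{\Lambda 1_N 1_N^T \Lambda}{\|\mu\|_2^2} = \frac{(F_k\mu)(F_k\mu)^T}{\|\mu\|_2^2},
\end{equation*}
is the same rank-one completion-of-square identity underlying the cited lemma, and your treatment of the degenerate case $\mu = 0$ (where $\widetilde F_k = F_k$ and $\Lambda = 0$) is the right caveat.
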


\begin{proof}[\textbf{Proof of Theorem \ref{sing_val_feat}}]

The assumption $n_k \geq N$ implies 
$\sigma_{\min}(F_k)^2 = \lambda_{\min}(F_kF_k^T)$. The proof will proceed by bounding 
$\lambda_{\min}(F_kF_k^T)$.

By lemma \ref{centfeat_est}, in order to bound 
$\lambda_{\min}(F_kF_k^T)$ is suffices to bound 
$\lambda_{\min}(\widetilde{F}_k\widetilde{F}_k^T - 
\frac{\Lambda 1_N1_N^T\Lambda}{\vert\vert\mu\vert\vert^2_2})$. The proof will focus on bounding this latter quantity. 

By Weyl's inequality we have
\begin{equation}\label{weyl_est}
	\lambda_{\min}\bigg{(}\widetilde{F}_k\widetilde{F}_k^T - 
\frac{\Lambda 1_N1_N^T\Lambda}{\vert\vert\mu\vert\vert^2_2}\bigg{)} 
\geq 
\lambda_{\min}(\widetilde{F}_k\widetilde{F}_k^T) - 
\lambda_{\max}(\frac{\Lambda 1_N1_N^T\Lambda}{\vert\vert\mu\vert\vert^2_2})).
\end{equation}
We start by bounding $\lambda_{\min}(\widetilde{F}_k\widetilde{F}_k^T)$. 

By the Gershgorin circle Theorem we have
\begin{align}
\lambda_{\min}(\widetilde{F}_k\widetilde{F}_k^T) &\geq 
\min_{i \in [N]}\vert\vert (\widetilde{F}_k)_{i:}\vert\vert^2_2 - N
max_{i\neq j}\vert\langle (\widetilde{F}_k)_{i:}, (\widetilde{F}_k)_{j:}
\rangle\vert \label{gersh1} \\
\lambda_{\min}(\widetilde{F}_k\widetilde{F}_k^T) &\leq 
\max_{i \in [N]}\vert\vert (\widetilde{F}_k)_{i:}\vert\vert^2_2 + N
max_{i\neq j}\vert\langle (\widetilde{F}_k)_{i:}, (\widetilde{F}_k)_{j:}
\rangle\vert. \label{gersh2}
\end{align}
By lemma \ref{c3}, we have for all $i \in [N]$ that
\begin{equation}
\vert\vert f_k(x_i) - \mathbb{E}_x[f_k(x)]\vert\vert_2^2 = 
\Theta\bigg{(} 
\sqrt{n_0}\beta_kn_k\prod_{l=1}^{k-1}\sqrt{\beta_l}\sqrt{n_l}
\bigg{)}\label{gersh_est1}
\end{equation}
w.p. $\geq 
1 - Nexp\bigg{(}-\Omega \bigg{(} 
\frac{\min_{l\in [0,k]}n_l}{\prod_{l=1}^{k-1}log(n_l)}
\bigg{)}
\bigg{)}
- \sum_{l=1}^kexp(-\Omega(n_l))$ over $(W_l)_{l=1}^k$ and $x$.

The goal is to find a bound for 
$\vert\langle (\widetilde{F}_k)_{i:}, (\widetilde{F}_k)_{j:}
\rangle\vert$. By assumption A4 we have that
\begin{equation*}
	\vert\vert f_k(x) - \mathbb{E}_xf_k(x)\vert\vert^2_{Lip} = 
	\mathcal{O}\bigg{(}
	\frac{s^k}{\min_{l \in [0,k]}n_l}\beta_kn_k\prod_{l=1}^{k-1}
	\sqrt{\beta_l}\sqrt{n_l}\prod_{l=1}^{k-1}Log(n_l)
	\bigg{)}
\end{equation*}
w.p. $\geq 1 - \sum_{l=1}^k2exp(-\Omega(n_l))$ over $(W_l)_{l=1}^k$,
where we used the fact that $f_k(x) - \mathbb{E}_xf_k(x)$ and $f_k(x)$ have the same Lipschitz constant.

We are going to condition on the intersection of the above event over 
$(W_l)_{l=1}^k$ and the event defined by \eqref{gersh_est1} over 
$(W_l)_{l=1}^k$ and $x_j$ and derive bounds over $x_i$. 
Since we have conditioned on $x_j$, 
$\vert\langle (\widetilde{F}_k)_{i:}, (\widetilde{F}_k)_{j:}
\rangle\vert$ is a function of $x_i$ for every $i \neq j$. We then have
\begin{align*}
\bigg{\vert}\bigg{\vert} \vert\langle (\widetilde{F}_k)_{i:}, (\widetilde{F}_k)_{j:}
\rangle\vert 
\bigg{\vert}\bigg{\vert}_{Lip} &\leq 
\vert\vert (\widetilde{F}_k)_{j:}\vert\vert^2_2
\vert\vert f_k(x_i) - \mathbb{E}_{x}f_k(x_i)\vert\vert^2_{Lip} \\
&=
\mathcal{O}\bigg{(}
\frac{s^k\sqrt{n_0}}{\min_{l \in [0,k]}n_l}\beta_k^2n_k^2\prod_{l=1}^k
\beta_ln_l\prod_{l=1}^{k-1}Log(n_l)
\bigg{)}
\end{align*}
using the above two asymptotic estimates we have conditioned on. Note that the above holds for all $x_i \neq x_j$.

Applying our Lipschitz concentration assumption A3, and taking the union of the above estimate over all $x_i \neq x_j$ we have
\begin{equation*}
\mathbb{P}\bigg{(}
\max_{i \in [N], i\neq j}
\vert\langle (\widetilde{F}_k)_{i:}, (\widetilde{F}_k)_{j:}
\rangle\vert \geq t
\bigg{)}
\leq
(N-1)exp\left(
-\frac{t^2}{\mathcal{O}
\bigg{(}
\frac{s^k\sqrt{n_0}}{\min_{l \in [0,k]}n_l}\beta_k^2n_k^2\prod_{l=1}^k
\beta_ln_l\prod_{l=1}^{k-1}Log(n_l
\bigg{)}
}
\right).
\end{equation*}
Choosing $t = N^{-1}\sqrt{n_0}\beta_kn_k\prod_{l=1}^{k-1}\sqrt{\beta_l}
\sqrt{n_l}$. We have
\begin{equation*}
	Nmax_{i\neq j}\vert\langle (\widetilde{F}_k)_{i:}, (\widetilde{F}_k)_{j:}
\rangle\vert \leq 
\beta_kn_k\prod_{l=1}^{k-1}\sqrt{\beta_l}\sqrt{n_l}
\end{equation*}
w.p. $\geq 1 - (N-1)exp\left(-
\Omega\left(
\frac{\min_{l \in [0,k]}n_l}{s^k\prod_{l=1}^{k-1}Log(n_l)}
\right)
\right) - \sum_{l=1}^k2exp(-\Omega(n_l))$.
The above estimate is true for every $x_j$. Therefore we can take a union of the bounds for each $j \in [N]$ to obtain
\begin{equation*}
	Nmax_{i\neq j}\vert\langle (\widetilde{F}_k)_{i:}, (\widetilde{F}_k)_{j:}
\rangle\vert = 
o\left(
\beta_kn_k\prod_{l=1}^{k-1}\sqrt{\beta_l}\sqrt{n_l}
\right)
\end{equation*}
w.p. $\geq 1 - N(N-1)exp\left(-
\Omega\left(
\frac{\min_{l \in [0,k]}n_l}{s^k\prod_{l=1}^{k-1}Log(n_l)}
\right)
\right) - N\sum_{l=1}^k2exp(-\Omega(n_l))$.

Thus we obtain 
\begin{equation}\label{gersh_est2}
\lambda_{\min}(\widetilde{F_k}\widetilde{F_k}^T) = \Theta\left(
\beta_kn_k\prod_{l=1}^{k-1}\sqrt{\beta_l}\sqrt{n_l}
\right)
\end{equation}
w.p. $\geq 1 - N(N-1)exp\left(-
\Omega\left(
\frac{\min_{l \in [0,k]}n_l}{s^k\prod_{l=1}^{k-1}Log(n_l)}
\right)
\right) - N\sum_{l=1}^k2exp(-\Omega(n_l))$. This bounds the first term on the right hand side of \eqref{weyl_est}. We move on to bounding the second term on the right hand side of \eqref{weyl_est}.

We want to bound the maximum eigenvalue of the quantity 
$\frac{\Lambda 1_N1_N^T\Lambda}{\vert\vert \mu\vert\vert^2_2}$. The maximum eigenvalue is the operator norm, therefore we will obtain an estimate for the operator norm. As a start we have the simple estimate
\begin{equation*}
\bigg{\vert}\bigg{\vert}
\frac{\Lambda 1_N1_N^T\Lambda}{\vert\vert \mu\vert\vert^2_2}
\bigg{\vert}\bigg{\vert}_{op}
\leq 
\bigg{\vert}\bigg{\vert}
\frac{\Lambda}{\vert\vert \mu\vert\vert_2}
\bigg{\vert}\bigg{\vert}_{op}^2.
\end{equation*}
We define an auxiliary random variable $g : \R^d \rightarrow \R$ by 
$g(x) = \langle f_k(x), \mu\rangle$. Note that 
$\Lambda_{ii} = g(x_i) - \mathbb{E}_x[g(x)]$ and that 
$\vert\vert g\vert\vert_{Lip}^2 \leq \vert\vert\mu\vert\vert_2^2
\vert\vert f_k\vert\vert_{Lip}^2$. Therefore, applying Liptshitz concentration, we get 
\begin{equation*}
\mathbb{P}\left(\vert \Lambda_{ii}\vert \geq t\right) \leq 
exp\left(
- \frac{t^2}{2\vert\vert\mu\vert\vert_2^2\vert\vert f_k\vert\vert^2_{Lip}}
\right).
\end{equation*}
From lemma \ref{c.2}, we have 
\begin{equation}\label{mu_est}
	\vert\vert\mu\vert\vert_2^2 = \Theta\left(
	\sqrt{n_0}\beta_kn_k\prod_{l=1}^{k-1}\sqrt{\beta_l}\sqrt{n_l}
	\right).
\end{equation}
Furthermore, our assumption on the Lipschitz constant gives the estimate
\begin{equation}\label{lip_assump_mu}
	\vert\vert f_k(x)\vert\vert^2_{Lip} = 
	\mathcal{O}\bigg{(}
	\frac{s^k}{\min_{l \in [0,k]}n_l}\beta_kn_k\prod_{l=1}^{k-1}
	\sqrt{\beta_l}\sqrt{n_l}\prod_{l=1}^{k-1}Log(n_l)
	\bigg{)}.
\end{equation}
If we take $t = \frac{1}{N}\vert\vert\mu\vert\vert_2^2$, and take a union bound over all the samples $\{x_i\}$ and the events defined by \eqref{mu_est}, 
\eqref{lip_assump_mu}, we get the estimate
\begin{equation}
\bigg{\vert}\bigg{\vert}
\frac{\Lambda}{\vert\vert \mu\vert\vert_2}
\bigg{\vert}\bigg{\vert}_{op}^2 = \mathcal{O}\left(
\frac{1}{N^2}\sqrt{n_0}\beta_kn_k\prod_{l=1}^{k-1}\sqrt{\beta_l}\sqrt{n_l}
\right)
\end{equation}
w.p. 
$\geq 1 - Nexp\left(-
\Omega\left(
\frac{\min_{l \in [0,k]}n_l}{s^kN^2\prod_{l=1}^{k-1}Log(n_l)}
\right)
\right) - \sum_{l=1}^k2exp(-\Omega(n_l))$.

Putting the estimate for $\lambda_{\min}
\left(\widetilde{F_k}\widetilde{F_k}^T\right)$ and 
$\bigg{\vert}\bigg{\vert}
\frac{\Lambda 1_N1_N^T\Lambda}{\vert\vert \mu\vert\vert^2_2}
\bigg{\vert}\bigg{\vert}_{op}$ together we obtain
\begin{equation*}
\lambda_{\min}\left(
F_kF_k^T
\right)
= 
\Theta\left(
	\sqrt{n_0}\beta_kn_k\prod_{l=1}^{k-1}\sqrt{\beta_l}\sqrt{n_l}
	\right)
\end{equation*}
w.p.  $\geq 1 - N(N-1)exp\left(-
\Omega\left(
\frac{\min_{l \in [0,k]}n_l}{s^kN^2\prod_{l=1}^{k-1}Log(n_l)}
\right)
\right) - N\sum_{l=1}^k2exp(-\Omega(n_l))$.
\end{proof}

\end{document}